\newtheorem{theorem}{Theorem}[section]
\newtheorem{lemma}{Lemma}[section]
\newtheorem{remark}{Remark}[section]
\newtheorem{definition}{Definition}
\newtheorem{corollary}[theorem]{Corollary}
\newif\ificml
\title{Generating Private Synthetic Data with Genetic Algorithms}
\newcommand*\samethanks[1][\value{footnote}]{\footnotemark[#1]}
\author[1]{Terrance Liu\thanks{Equal contribution; all authors are listed in alphabetical order.}}
\author[2]{Jingwu Tang\samethanks}
\author[3]{Giuseppe Vietri\samethanks}
\author[1]{Zhiwei Steven Wu}
\affil[1]{Carnegie Mellon University}
\affil[2]{Peking University}
\affil[3]{University of Minnesota}
\DeclareMathOperator{\argmin}{\text{arg}\min}
\newcommand{\rap}{\textsc{RAP}}
\newcommand{\rp}{\textsc{RP}}
\newcommand{\gn}{\textsc{GN}}
\newcommand{\rappp}{\textsc{RAP++}}
\newcommand{\pgm}{\textsc{PGM}}
\newcommand{\pgmem}{\textsc{PGM+EM}}
\newcommand{\gem}{\textsc{GEM}}
\newcommand{\smallDB}{\textsc{SmallDB}}
\newcommand{\mwem}{\textsc{MWEM}}
\newcommand{\dualquery}{\textsc{DualQuery}}
\newcommand{\fem}{\textsc{FEM}}
\newcommand{\Loss}{\text{L}}
\newcommand{\cX}{\mathcal{X}}
\newcommand{\cR}{\mathcal{R}}
\newcommand{\cN}{\mathcal{N}}
\newcommand{\cP}{\mathcal{P}}
\newcommand{\pp}[1]{\left( #1 \right)}
\newcommand{\dotp}[1]{\langle #1 \rangle}
\newcommand{\prob}{\text{Pr}}
\newcommand{\pr}[1]{\prob\left[ #1 \right]}
\newcommand{\Dhat}{\widehat{D}}
\newcommand{\Dbar}{\bar{D}}
\newcommand{\Dtil}{\widetilde{D}}
\newcommand{\simpleGA}{\textsc{SimpleGA}}
\newcommand{\GSD}{\textsc{Private-GSD}}
\newcommand{\privGA}{\GSD}
\newcommand{\RAP}{\textsc{RAP++}}
\newcommand{\acsreal}{\textsc{ACSreal}\xspace}
\newcommand{\acsmultitask}{\textsc{ACSmultitask}\xspace}
\newcommand{\acsemp}{\textsc{ACSemployment}\xspace}
\newcommand{\acscov}{\textsc{ACScoverage}\xspace}
\newcommand{\acsinc}{\textsc{ACSincome}\xspace}
\newcommand{\acsmob}{\textsc{ACSmobility}\xspace}
\newcommand{\acstra}{\textsc{ACStravel}\xspace}
\newcommand{\acsrealdata}[1]{\textsc{ACSreal-#1}\xspace}
\newcommand{\acsmobdata}[1]{\textsc{ACSmob-#1}\xspace}
\newcommand{\acsempdata}[1]{\textsc{ACSemp-#1}\xspace}
\newcommand{\acscovdata}[1]{\textsc{ACScov-#1}\xspace}
\newcommand{\acsincdata}[1]{\textsc{ACSinc-#1}\xspace}
\newcommand{\acstradata}[1]{\textsc{ACStra-#1}\xspace}
\newcommand{\invtemp}{{\sigma^{\text{it}}}}
\newcommand{\LR}{\textsc{lr}}
\def\gX{{\mathcal{X}}}
\newcommand{\anglebrack}[1]{\left\langle#1\right\rangle}
\newcommand{\norm}[1]{\left\lVert #1 \right\rVert}
\newcommand{\elite}{\mathcal{E}}
\newcommand{\Pmut}{P_{\text{mut}}}
\newcommand{\Pcross}{P_{\text{cross}}}
\date{}
\begin{document}
\maketitle

\begin{abstract}

We study the problem of efficiently generating differentially private synthetic data that approximate the statistical properties of an underlying sensitive dataset.  In recent years, there has been a growing line of work that approaches this problem using first-order optimization techniques. However, such techniques are restricted to optimizing differentiable objectives only, severely limiting the types of analyses that can be conducted. For example, first-order mechanisms have been primarily successful in approximating statistical queries only in the form of marginals for discrete data domains. In some cases, one can circumvent such issues by relaxing the task's objective to maintain differentiability. However, even when possible, these approaches impose a fundamental limitation in which modifications to the minimization problem become additional sources of error. Therefore, we propose \privGA{}, a private genetic algorithm based on {\em zeroth}-order optimization heuristics that do not require modifying the original objective. As a result, it avoids the aforementioned limitations of first-order optimization. We empirically evaluate \privGA{} against baseline algorithms on data derived from the American Community Survey across a variety of statistics---otherwise known as statistical queries---both for discrete and real-valued attributes. We show that \privGA{} outperforms the state-of-the-art methods on non-differential queries while matching accuracy in approximating differentiable ones. 
\end{abstract}

\section{Introduction}

Access to high-quality data is critical for decision-making and data analysis. However, the reliance on sensitive data can reveal private information about the individuals in the data, such as medical conditions or financial status. Therefore, privacy concerns impose legal and ethical constraints on what one can access for data analysis. \emph{Differential privacy} \citep{dwork2006calibrating} has become an increasingly popular framework  for protecting sensitive information by providing a formal privacy guarantee that allows one to calibrate the trade-off between privacy and accuracy. Moreover, differentially private synthetic data has become especially attractive, given that such data can be accessed repeatedly without added privacy costs. Consequently, we focus on the problem of generating synthetic data that approximates various properties of the underlying sensitive dataset while providing privacy guarantees. 

A standard approach to this problem is to find a synthetic dataset that matches a large family of statistics derived from the underlying dataset. In this work, we focus on statistics in the form of  \textit{statistical queries}, which count the fraction of examples that satisfy some specific properties. This approach is often framed as synthetic data for \textit{private query release}, where the goal is to release answers to an extensive collection of statistical queries by outputting a synthetic dataset from which answers are derived from. There exists a long line of research studying synthetic data for private query release \citep{BLR08, MWEM, dualquery, HDMM, FEM, mckenna2019graphical, aydore2021differentially, liu2021leveraging, liu2021iterative}. Moreover, when the set of statistical properties is sufficiently rich, this approach has been shown to outperform differentially private deep generative models (e.g., GANs) in both capturing various statistical properties and enabling downstream machine learning tasks \citep{tao2021benchmarking, vietri2022private}. 

%
Even though the problem of generating synthetic data for private query release is shown to be computationally intractable in the worst case \citep{hardsynthetic, ullman2013answering},  there has been a line of work on practical algorithms that can perform well on real datasets \citep{dualquery, FEM, mckenna2019graphical, liu2021leveraging}. Before our work, the algorithms that achieve state-of-the-art performance all leverage gradient-based optimization \citep{liu2021iterative, vietri2022private} to minimize the error for certain classes of statistical queries. However, these methods require differentiability, severely limiting the types of statistical properties one can optimize. For example, existing algorithms such as \rap{} \citep{aydore2021differentially} and \gem{} \citep{liu2021iterative} focus specifically on using first-order mechanisms for approximating marginal queries that apply to discrete data only, requiring real-valued attributes be discretized first. 


Many natural classes of queries for data containing real-valued attributes are non-differentiable, such as prefixes (equivalently the CDF of an attribute), and their higher-dimensional extension, halfspaces. Moreover, these queries cannot be optimized directly by methods that operate over discrete data, even with the discretization of real-valued features. \citet{vietri2022private} circumvent discretization by approximating non-differentiable queries with differentiable surrogate functions. Thus, their approach, \rappp{}, can directly optimize over a large class of statistical queries. However, their method induces additional error due to the relaxation of the original minimization problem, which may not produce an optimal solution for the original one. \footnote{
In Appendix \ref{appx:rappp}, we show a simple example in which the continuous relaxation leads \rappp{} to a bad local minimum very far from the optimal solution---\emph{even} when absent of privacy constraints.}
\footnote{\cref{fig:badexample} gives a simple example where \rappp{} fails to optimize a set of $3$ prefix queries on $1$-dimensional real-valued data.}


In light of these challenges, we propose a new synthetic data generation algorithm, \privGA{}, that is capable of outputting \textit{mixed}-type data (i.e., containing both discrete and real-valued attributes) without requiring differentiability in its optimization objective or discretization of real-valued attributes. \GSD{} is a zeroth-order mechanism based on a genetic algorithm that avoids the limitations of first-order methods such as \rappp{}, which uses surrogate loss functions to ensure differentiability. Instead, \privGA{} can optimize the error objective directly. Inspired by the genetic algorithm $\simpleGA$ \citep{such2017deep}, our algorithm \GSD{} evolves a population of datasets over $G$ generations---starting from a population of $E$ randomly chosen datasets, where on each generation, the algorithm maintains a set of elite synthetic datasets. Then, the algorithm uses this elite set to produce the population for the next round by crossing samples (i.e., combining parameters) and applying specific mutations (i.e., perturbing parameters). For the purpose of private synthetic data, we devise new strategies for crossing and mutating samples. To illustrate the limitations of the first-order approach \rappp{} and how \GSD{} overcomes them, we provide a simple example in \cref{fig:badexample}, where both algorithms try to approximate a target distribution over real values.

\begin{figure}
    \centering
    \ificml
        \includegraphics[width=0.3\textwidth]{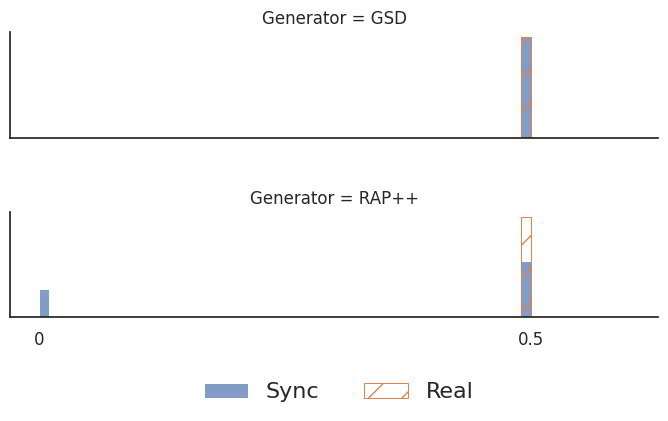}
    \else
        \includegraphics[width=0.6\textwidth]{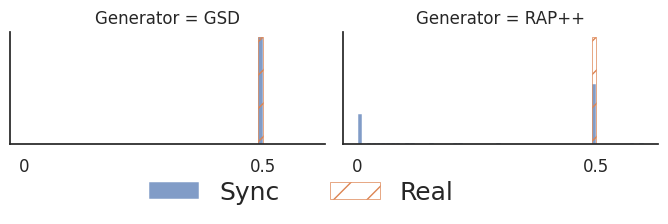}
    \fi
    \caption{
    The histogram visually demonstrates a specific scenario where the \rappp{} mechanism fails to generate synthetic data that approximates a collection of statistical prefix queries. In this particular scenario, both \GSD{} and \rappp{} take as input three prefix queries with thresholds set at $0.49, 0.50,$ and $0.51$ and the input dataset \textsc{Real}, represented by the clear bars, that has all values equal to $0.5$. The output of both \GSD{} or \rappp{} is represented by the blue histograms. 
    \ificml
    The graph on the top shows that \GSD{} successfully produces an optimal synthetic data whose histogram exactly overlaps with the histogram of the \textsc{Real} data. On the other hand,  the bottom graph shows how \rappp{} fails to optimize this case by generating a significant portion of its synthetic points near $0$.
    \else 
    The graph on the left shows that \GSD{} successfully produces an optimal synthetic data whose histogram exactly overlaps with the histogram of the \textsc{Real} data. On the other hand,  the right graph shows how \rappp{} fails to optimize this case by generating a significant portion of its synthetic points near $0$.
    \fi
    }
    \label{fig:badexample}
\end{figure}

Moreover, given that our method only requires zeroth-order access to any objective function, we demonstrate that \privGA{} can directly optimize over a wide range of statistical queries that capture various statistical properties summarizing both discrete and real-valued data attributes. Specifically, we first conduct experiments in mixed-type data domains, evaluating on random (1) halfspace and (2) prefix queries. We then explore categorical-only data, in which we use (3) $k$-way categorical marginal queries. Lastly, we again evaluate \privGA{} on mixed-type data using (4) $k$-way \emph{binary-tree marginal}\footnote{See \cref{sec:experiments}  for definition of binary-tree marginals. } queries. Note that (3) and (4) can be represented as differentiable functions, while (1) and (2) cannot.

We summarize our contributions as the following:
\begin{itemize}
    \item We present \privGA{}, a versatile genetic algorithm that can generate synthetic data to approximate a wide range of statistical queries, regardless of their differentiability. To achieve this, we develop novel crossover and mutation functions that are crucial to the success of \privGA{} across various statistical queries. 
    \item We conduct an empirical evaluation on data derived from the American Community Survey. We note that our method is highly parallelizable, scaling to the high-dimensional data settings explored in our experiments.
    \item In the mixed-type data domain, we show that \privGA{} outperforms the state-of-the-art method, \rappp{}, on non-differentiable queries (i.e., halfspace and prefix queries). Furthermore, we show that \emph{even} for differentiable query classes (i.e., $k$-way categorical and binary-tree marginals), \privGA{} matches first-order optimization methods.\footnote{Our work is also the first to modify our baseline methods, \rap{} and \gem{}, to optimize over $k$-way binary-tree marginals for mixed type data, with the limitation being that real-valued attributes are discretized first.}
    \item We explore the utility of private synthetic data in the context of machine learning tasks. Our findings show that \privGA{} performs well compared to baseline methods, but also demonstrate that differentially private synthetic data for downstream ML tasks remains an open problem in the field.
\end{itemize}

\paragraph{Additional related work.}

In addition to algorithms \rap{}, \gem{}, and \rappp{} \citep{aydore2021differentially, liu2021iterative, vietri2022private}, another benchmark algorithm that uses first-order optimization is the PrivatePGM (\pgm{}) algorithm \citep{mckenna2019graphical} and its variations \citep{mckenna2022aim}.  Similar to \rap{} and \gem{}, \pgm{} does not directly does not support non-differentiable queries such as thresholds and half-spaces on real-valued data. While one can first discretize numerical features, \citet{vietri2022private} show that \rappp{} still performs better than \pgm{} on real-valued data for approximating prefix queries and downstream machine learning. Finally, there is a line of work \citep{XieLWWZ18, JordonYvdS18, gan2, gan3, vae, harder2021dp} that develops differentially private algorithms for training deep generative models (e.g., GANs, VAEs, etc.), but they generally have not shown promising results for enforcing consistency with simple classes of statistics when applied to tabular data \citep{tao2021benchmarking, vietri2022private}.

\section{Preliminaries}\label{sec:prelims}

\subsection{Datasets and Queries }

Let's consider $\cX$ as a data domain of dimensionality $d$. Here, each element $x\in \cX$ is an observation within this data domain. For any given observation $x\in \cX$, we refer to its $i$-th feature as $x_i\in \cX_i$. The feature domain associated with this $i$-th feature is denoted by $\cX_i$. We define a dataset $D$ as a multiset of observations derived from the data domain $\cX$, where $D\subseteq \cX^*$. Often, to ensure a convenient representation of categorical features, we employ the one-hot encoding approach. In this context, one-hot encoding portrays a categorical value as a binary vector where all elements, barring the one corresponding to the category, are set to zero. We introduce $h_{oh}:\cX\rightarrow[0,1]^{d'}$ to denote the function that maps elements of $\cX$ to their one-hot encoding, with $d'$ representing the dimensionality of the one-hot encoded space. Finally, for any discrete set $S$, we define $U(S)$ as the uniform distribution over this set $S$. A random sample from $U(S)$ can be represented as $s\sim U(S)$. For any integer $i$, we use $[i]={1,\ldots,i}$ to represent the set of all integers from $1$ to $i$.

In this research, our focus lies on mixed-valued datasets within a bounded domain, with a particular emphasis on answering statistical queries. A statistical query is defined as a function $q:\cX^* \rightarrow [0,1]$ that maps a dataset to a value within the bounded interval [0,1]. The formal definition of a statistical query in this context is

\begin{definition}[Statistical Query \citep{kearns1998efficient}]
    A \emph{statistical query} is defined by a predicate function $\phi:\cX\rightarrow\{0,1\}$. Given a dataset $D\in \cX^N$, the corresponding statistical query $q_\phi$ is defined as : $q_\phi(D) = \frac{1}{N}\sum_{x\in D} \phi(x)$. 
\end{definition}

We use $Q=\{q_i, \ldots, q_m\}$ to represent a set of $m$ statistical queries and for a given dataset $D$.
Furthermore, for convenience,  we use
$Q(D) = [q_1(D),\ldots, q_m(D)] \in [0,1]^m$ to denote the answers of all $m$ statistical queries on $D$. 

In this work, we are interested in answering different types of statistical queries that capture different statistical properties. The first type is the class of categorical marginals, which has been the primary focus of most prior works. The \emph{$k$-way categorical marginal} queries count the fraction of rows in a dataset that matches a combination of values.


\begin{definition}[Categorical Marginal Queries]\label{def:cat_marginals}
A $k$-way \textit{categorical marginal} query is defined by a set of categorical features $S$ of cardinality $|S|=k$, together with a particular element $c\in \prod_{i\in S} \cX_i$ in the domain of $S$.  Given such a pair  $(S, c)$, let $\cX(S, c)=\{x \in \cX : x_S = c \}$ denote the set of points that match $c$. The corresponding statistical query $q_{S, c}$ is defined as $q_{S,c}(D) = \sum_{x\in D}\mathbbm{1}\{x\in \cX(S, c)\}$, 
where $\mathbbm{1}$ is the indicator function.
\end{definition}

For real-values features, we define the class of \emph{range} queries or \emph{$k$-way range-marginals} that capture the $k$.  A $1$-way range query is defined by an attribute $A$ and two real-values $a, b\in [0,1]$ and counts the number of rows where real-valued attribute $A$ lies in $[a, b]$.

\begin{definition}[Range Marginal Queries]\label{def:range_marginals}
 A $k$-way \textit{range marginal} query is defined by a set of categorical features $C$, an element $y\in \cX_C$, a set of numerical features $R$ and a set of intervals $\tau=\{[\tau_{j,0}, \tau_{j,1}]\}_{j\in R}$, with $|C| + |R| = k$ and $|R| = |\tau|$.
Let $\cX(C, y)$ be as in \cref{def:cat_marginals} and let $\cR(R, \tau) =\{x\in\cX : \tau_{j, 0} \leq x_j \leq \tau_{j, 1} \quad \forall_{j\in R}\}$ denote the set of points where each feature $j\in R$ fall below its corresponding threshold value $\tau_j$.
Then the statistical query $q_{C, y, R, \tau}$ is defined as $$    q_{C, y, R, \tau}(D) =\sum_{x\in D} 
   \mathbbm{1}\{x\in \cX(C, y) \} \cdot   \mathbbm{1}\{x\in \cR(R, \tau) \} .$$ 
\end{definition}

We note that while range marginal queries are useful for capturing statistical properties of real-valued data, they can also be applicable to discretized data. For example, one can discretize a real-valued column into the bins that are a superset of the intervals $\tau$ defined in Definition \ref{def:range_marginals}.\footnote{In Section \ref{sec:experiments} and the appendix, we will discuss how range marginal queries are still applicable to \gem{} and \rap{}, which handle real-valued attributes by discretizing them first.}

Next, we present two additional queries---halfspace and prefix queries---that are also useful in summarizing real-valued features. We emphasize that such queries are the focus of our work, since they cannot be directly optimized by existing methods using first-order optimization. These queries can in some ways be thought of as higher-dimensional generalizations of CDF functions. Moreover, synthetic data matching statistical properties defined by these queries have been shown to do well on downstream tasks like machine learning classification \citep{vietri2022private}.

\begin{definition}[Halfspace Query]\label{def:halfspace}
A \textit{halfspace} query is defined by
a vector $\theta\in\mathbb{R}^{d'}$ and a threshold $\tau\in \mathbb{R}$. The \emph{halfspace} query is given by $q_{ \theta, \tau}(D) =\sum_{x\in D}  \mathbbm{1}\{\dotp{\theta, h_{oh}(x)} \leq \tau\}$, where 
$h_{oh}:\cX\rightarrow[0,1]^{d'}$ is the function that maps elements of $\cX$ to their one-hot encoding. 
\end{definition}

\begin{definition}[$k$-way prefix queries]\label{def:prefix}
A $k$-way prefix query is defined by a set of real-valued features $C\subseteq [d]$ ($|C|=k$) and a set of threshold $\tau=\{\tau_i \in [0,1]: i\in C\}$. Let $\cX(C, \tau) =\{x\in\cX : x_i\leq \tau_i \quad \forall i\in C\}$ be the set of points, that has each feature $i\in C$ fall below the corresponding threshold $\tau_i$. Then the prefix query $q_{C, \tau}$ is given by $q_{C, \tau}(D) = \sum_{x\in D} \mathbb{I}\{x\in \cX(C, \tau)\}$.
\end{definition}


\subsection{Differential Privacy} 
To protect user privacy, we estimate $Q(D)$ under the constraint of differential privacy (DP) \cite{dwork2006calibrating}, which is a widely accepted notion of privacy protection at the user level. 
For example, the US Census Bureau has begun using differential privacy to protect the privacy of individuals when releasing census data. Differential privacy is used to add noise to the census data in order to obscure the data of any one individual, while still providing accurate statistical information about the population as a whole.
In this section, we present the formal definition of differential privacy. However, before we proceed to this definition, it is important to establish that two datasets are considered 'neighboring' if they differ by no more than a single data point.

\begin{definition}[Differential Privacy (DP) \citep{dwork2006calibrating}]\label{def:dp}
A randomized algorithm $\mathcal{M}:\cX^N\rightarrow \cR$ satisfies $(\varepsilon, \delta)$-differential privacy if 
for all neighboring datasets $D, D'$ and for all outcomes $S\subseteq \cR$ we have
\begin{align*}
    \pr{\mathcal{M}(D) \in S} \leq e^{\varepsilon} \pr{\mathcal{M}(D') \in S}+\delta
\end{align*}
\end{definition}

In this work, we also utilize a concept of {\em zero-Concentrated Differential Privacy} (zCDP) \citep{DworkR16, BunS16}, which is related to DP, but provides some advantages in terms of the analysis.  

\begin{definition}[zCDP \citep{bun2016concentrated}]\label{def:zcdp}A randomized algorithm $\mathcal{M}:\cX^N\rightarrow \cR$ satisfies $\rho$-zero-Concentrated differential privacy ($\rho$-zCDP) if 
for all neighboring datasets $D, D'$ and for all $\alpha\in(1,\infty)$ we have: 
$$\mathbb{D}_{\alpha}(\mathcal{M}(D),\mathcal{M}(D'))\leq\rho\alpha,$$ where $\mathbb{D}_{\alpha}(\mathcal{M}(D),\mathcal{M}(D'))$ is $\alpha$-Renyi divergence between the distributions $\mathcal{M}(D)$ and $\mathcal{M}(D')$.
\end{definition}

The notions of DP and zCDP are related via the following result that says that any mechanism that satisfies zCDP also satisfies DP. Therefore, in this work, we conduct the main analysis using zCDP but present the final results in terms of DP. 

\begin{theorem}[zCDP to DP]
    If $M$ provides $\rho$-zCDP, then $M$ is $(\rho+2\sqrt{\rho\log(1/\delta)},\delta)$-DP for any $\delta>0$.
\end{theorem}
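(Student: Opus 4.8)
The plan is to route through the privacy loss random variable and the standard moment generating function / Chernoff argument of \citet{bun2016concentrated}. First I would fix an arbitrary pair of neighboring datasets $D, D'$ and introduce the privacy loss random variable $Z = \log\frac{M(D)(o)}{M(D')(o)}$, where $o$ is sampled from $M(D)$ (interpreting the ratio via Radon--Nikodym densities in the continuous case). The key identity is that Rényi divergence is, up to scaling, the log-moment-generating function of $Z$: $\mathbb{D}_{\alpha}(M(D), M(D')) = \frac{1}{\alpha-1}\log \mathbb{E}\!\left[e^{(\alpha-1)Z}\right]$ for $\alpha > 1$. Plugging in the $\rho$-zCDP hypothesis $\mathbb{D}_{\alpha}(M(D),M(D')) \le \rho\alpha$ and substituting $t = \alpha - 1 > 0$ gives the sub-Gaussian-type bound $\mathbb{E}[e^{tZ}] \le e^{\rho t^2 + \rho t}$ valid for every $t > 0$.

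Next I would apply a Chernoff bound to the upper tail of $Z$: for any $t > 0$, $\Pr[Z > \varepsilon] \le e^{-t\varepsilon}\,\mathbb{E}[e^{tZ}] \le e^{\rho t^2 - t(\varepsilon - \rho)}$. Since the claimed parameter satisfies $\varepsilon = \rho + 2\sqrt{\rho\log(1/\delta)} > \rho$ whenever $\delta < 1$ (and the DP conclusion is vacuous for $\delta \ge 1$), I may optimize the exponent over $t$; the minimizer is $t^{\star} = \frac{\varepsilon - \rho}{2\rho}$, which yields $\Pr[Z > \varepsilon] \le e^{-(\varepsilon-\rho)^2/(4\rho)}$. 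Setting the right-hand side equal to $\delta$ and solving for $\varepsilon$ reproduces exactly $\varepsilon = \rho + 2\sqrt{\rho\log(1/\delta)}$.

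Finally I would invoke the standard conversion from a privacy-loss tail bound to an $(\varepsilon,\delta)$-DP guarantee. Given $\Pr_{o\sim M(D)}[Z > \varepsilon] \le \delta$, for any measurable $S \subseteq \cR$ I decompose $\Pr[M(D)\in S] = \Pr[M(D)\in S,\ Z \le \varepsilon] + \Pr[M(D)\in S,\ Z > \varepsilon]$; on the event $\{Z \le \varepsilon\}$ the density ratio is at most $e^{\varepsilon}$ pointwise, so the first term is at most $e^{\varepsilon}\Pr[M(D')\in S]$, while the second is at most $\delta$, giving $\Pr[M(D)\in S] \le e^{\varepsilon}\Pr[M(D')\in S] + \delta$. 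Running the same argument with the roles of $D$ and $D'$ exchanged (the zCDP definition is assumed for all ordered neighboring pairs) establishes the inequality for every neighboring pair and every event, which is precisely $(\varepsilon,\delta)$-DP.

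The MGF identity and the Chernoff optimization are routine. The one place that needs genuine care is the measure-theoretic handling of the privacy-loss lemma --- defining $Z$ through Radon--Nikodym derivatives when $M(D)$ is not absolutely continuous with respect to $M(D')$, and checking that the exceptional null sets do not corrupt the bound --- but this is classical and I would simply cite it from \citet{bun2016concentrated} rather than reprove it.
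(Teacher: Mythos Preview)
The paper does not actually prove this theorem; it is stated as a background result and implicitly attributed to \citet{bun2016concentrated}, with no accompanying argument in the body or appendix. Your proposal is correct and is precisely the standard Bun--Steinke proof: rewrite the R\'enyi bound as a moment-generating-function bound on the privacy loss random variable, apply a Chernoff tail bound and optimize the exponent, and then convert the tail bound on $Z$ into $(\varepsilon,\delta)$-DP via the usual decomposition of $\Pr[M(D)\in S]$ over $\{Z\le\varepsilon\}$ and its complement. There is nothing to compare against in the paper itself, and I see no gap in your argument.
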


An important property of DP and zCDP is composition, that allows multiple differentially private mechanisms to be combined while still maintaining a level of differential privacy. The overall privacy loss in such a composition is roughly the sum of the individual privacy losses of each mechanism. The composition property of zCDP is given by the following result:
\begin{lemma}[zCDP Composition, \citep{bun2016concentrated}]\label{lemma:comp} Let $\mathcal{A}_1:\mathcal{X}^N\rightarrow R_1$ be $\rho_1$-zCDP. Let $\mathcal{A}_2:\mathcal{X}^N\times R_1 \rightarrow R_2$ be such that $\mathcal{A}_2(\cdot,r)$ is $\rho_2$-zCDP for every $r\in R_1$. Then the algorithm $\mathcal{A}(D)$ that computes $r_1=\mathcal{A}_1(D), r_2=\mathcal{A}_2(D,r_1)$ and outputs $(r_1,r_2)$ satisfies  $(\rho_1+\rho_2)$-zCDP.
\end{lemma}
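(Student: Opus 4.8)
The plan is to work directly with the R\'enyi-divergence characterization of zCDP from \cref{def:zcdp} and to establish an adaptive ``chain rule'' for the R\'enyi divergence of joint distributions. Fix an arbitrary pair of neighboring datasets $D, D'$ and an order $\alpha\in(1,\infty)$. Let $P$ and $Q$ denote the laws of the pair $\mathcal{A}(D) = (r_1, r_2)$ and $\mathcal{A}(D') = (r_1, r_2)$ on $R_1\times R_2$, and factor them as $P(r_1,r_2) = P_1(r_1)\,P_{2\mid1}(r_2\mid r_1)$ and $Q(r_1,r_2) = Q_1(r_1)\,Q_{2\mid1}(r_2\mid r_1)$, where $P_1, Q_1$ are the laws of $\mathcal{A}_1(D), \mathcal{A}_1(D')$ and $P_{2\mid1}(\cdot\mid r_1), Q_{2\mid1}(\cdot\mid r_1)$ are the laws of $\mathcal{A}_2(D, r_1), \mathcal{A}_2(D', r_1)$. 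Since $D, D', \alpha$ are arbitrary, it suffices to show $\mathbb{D}_\alpha(P\,\|\,Q)\le(\rho_1+\rho_2)\alpha$.

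The core computation expands the moment form $\exp\!\big((\alpha-1)\mathbb{D}_\alpha(P\,\|\,Q)\big) = \mathbb{E}_{(r_1,r_2)\sim P}\big[(P/Q)^{\alpha-1}\big]$. Using the factorization, the likelihood ratio splits as $\frac{P(r_1,r_2)}{Q(r_1,r_2)} = \frac{P_1(r_1)}{Q_1(r_1)}\cdot\frac{P_{2\mid1}(r_2\mid r_1)}{Q_{2\mid1}(r_2\mid r_1)}$; conditioning on $r_1$ and pulling the inner expectation over $r_2$ inside, I recognize that inner factor as $\exp\!\big((\alpha-1)\mathbb{D}_\alpha(P_{2\mid1}(\cdot\mid r_1)\,\|\,Q_{2\mid1}(\cdot\mid r_1))\big)$. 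This is where the uniformity hypothesis on $\mathcal{A}_2$ enters: because $\mathcal{A}_2(\cdot, r_1)$ is $\rho_2$-zCDP for \emph{every} $r_1$ and $D, D'$ are neighboring, this conditional divergence is at most $\rho_2\alpha$ regardless of $r_1$, so I bound it by the constant $\exp\!\big((\alpha-1)\rho_2\alpha\big)$ and factor it out of the outer expectation. What remains is $\exp\!\big((\alpha-1)\rho_2\alpha\big)\cdot\mathbb{E}_{r_1\sim P_1}\big[(P_1/Q_1)^{\alpha-1}\big] = \exp\!\big((\alpha-1)\rho_2\alpha\big)\cdot\exp\!\big((\alpha-1)\mathbb{D}_\alpha(P_1\,\|\,Q_1)\big)$, and the $\rho_1$-zCDP guarantee of $\mathcal{A}_1$ gives $\mathbb{D}_\alpha(P_1\,\|\,Q_1)\le\rho_1\alpha$. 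Multiplying the two exponential bounds and applying $\frac{1}{\alpha-1}\log(\cdot)$ yields $\mathbb{D}_\alpha(P\,\|\,Q)\le(\rho_1+\rho_2)\alpha$, as desired.

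The main obstacle is technical rather than conceptual: making the conditioning step rigorous when $R_1$ and $R_2$ are not discrete, which requires phrasing the factorization through regular conditional distributions and densities with respect to a common dominating measure, along with the usual conventions at points where $Q$ vanishes. This does not affect the bound, so in the write-up I would either restrict attention to the standard setting where these densities exist or simply cite the general statement of \citet{bun2016concentrated}. One secondary remark is worth including: the released object is the \emph{pair} $(r_1, r_2)$, so the guarantee already covers publishing the intermediate output $r_1$; if only $r_2$ (or any function of the pair) is released, the same bound follows by the post-processing/monotonicity property of R\'enyi divergence.
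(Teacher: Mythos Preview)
Your proof is correct and follows the standard R\'enyi-divergence chain-rule argument. Note, however, that the paper does not actually prove this lemma: it is stated with a citation to \citet{bun2016concentrated} and used as a black box (e.g., in the proof of \cref{thm:privacy}). Your write-up is essentially the proof given in that reference, so there is nothing to compare against within the paper itself.
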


Another feature of zCDP is the post-processing property that ensures that once a mechanism meets the zCDP criterion, any subsequent processing of its output cannot compromise the privacy guarantee. This essentially implies that no additional details about the dataset can be inferred from the output.
\begin{lemma}[Post-processing]\label{lemma:post} Let $M:\mathcal{X}^N\rightarrow\mathcal{Y}$ and $f:\mathcal{Y}\rightarrow\mathcal{Z}$ be randomized algorithms. Suppose $M$ satisfies $\rho$-zCDP. Define $M':\mathcal{X}^N\rightarrow\mathcal{Z}$ by $M'(X)=f(M(X))$. Then $M'$ satisfies $\rho$-zCDP.
\end{lemma}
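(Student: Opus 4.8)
The plan is to reduce the statement to the data-processing inequality for R\'enyi divergence. By \cref{def:zcdp} it suffices to fix an arbitrary pair of neighboring datasets $D, D'$ and an arbitrary order $\alpha\in(1,\infty)$ and to show that $\mathbb{D}_{\alpha}(M'(D),M'(D'))\leq\rho\alpha$. Since $M'(X)=f(M(X))$, the output distributions $M'(D)$ and $M'(D')$ are obtained by pushing the distributions $M(D)$ and $M(D')$ through the \emph{same} (possibly randomized) map $f$. So the whole statement follows once we know that post-composing a common channel cannot increase R\'enyi divergence, together with the bound $\mathbb{D}_{\alpha}(M(D),M(D'))\leq\rho\alpha$ that we get for free from $\rho$-zCDP of $M$.

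First I would dispatch the internal randomness of $f$. Writing $f(y)=g(y,\omega)$ for a deterministic measurable map $g$ and an independent source of randomness $\omega\sim\mu$ (independent of the data and of $M$), the law of $M'(D)$ is the $g$-pushforward of the product law $\mathrm{Law}(M(D))\otimes\mu$, and similarly for $D'$. Because $\omega$ carries the same law in both branches, additivity of R\'enyi divergence over product measures (with the second factor contributing zero) gives $\mathbb{D}_{\alpha}\big(\mathrm{Law}(M(D))\otimes\mu,\ \mathrm{Law}(M(D'))\otimes\mu\big)=\mathbb{D}_{\alpha}(M(D),M(D'))$. This reduces the claim to the case of a deterministic post-processing map $g$.

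Next I would invoke the data-processing inequality for R\'enyi divergence: for any measurable map $g$, any distributions $P,Q$, and any $\alpha\in(1,\infty)$, $\mathbb{D}_{\alpha}(g(P),g(Q))\leq\mathbb{D}_{\alpha}(P,Q)$ (a standard fact; see, e.g., \citet{bun2016concentrated}). If one wants a self-contained argument, this can be derived from the identity $\exp\big((\alpha-1)\mathbb{D}_{\alpha}(P\|Q)\big)=\mathbb{E}_{Q}\big[(dP/dQ)^{\alpha}\big]$ by conditioning on the value of $g$ and applying Jensen's inequality to the convex function $t\mapsto t^{\alpha}$ (convex since $\alpha>1$) to the conditional expectation of the likelihood ratio, and then using monotonicity of $z\mapsto z^{1/(\alpha-1)}$. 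Chaining this with the reduction above yields $\mathbb{D}_{\alpha}(M'(D),M'(D'))\leq\mathbb{D}_{\alpha}(M(D),M(D'))\leq\rho\alpha$, and since $D,D'$ and $\alpha$ were arbitrary, $M'$ satisfies $\rho$-zCDP.

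The only real obstacle is the data-processing inequality itself: making the Jensen argument fully rigorous over general (possibly uncountable) output spaces and for non-integer $\alpha$ requires some care with Radon--Nikodym derivatives and regular conditional distributions. None of this is specific to our setting, however, so I would simply cite the standard treatment rather than reproving it, and keep the body of the proof at the level of the three-line reduction above.
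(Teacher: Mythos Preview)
Your proof is correct and follows the standard route: reduce to the data-processing inequality for R\'enyi divergence, handle randomized $f$ by absorbing its independent randomness into a product measure, and invoke (or sketch via Jensen) the deterministic data-processing inequality. There is nothing to compare against, however: the paper states this lemma without proof, treating it as a standard property of zCDP drawn from \citet{bun2016concentrated}. Your argument is exactly the one found there, so you have reconstructed what the paper is implicitly citing.
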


%
%

{Next, we describe the basic private tools used in this work. The first one is the Gaussian mechanism for privately estimating a selected set of statistical queries. This mechanism operates by introducing Gaussian noise to the computed statistics. The scale of the Gaussian noise is determined by the sensitivity of the queries—defined as the maximum variation in a query's value attributable to a single alteration in the dataset. More formally, consider a vector-valued query $Q: \mathcal{X}^N \rightarrow [0,1]^m$. The $\ell_2$-sensitivity of this query, denoted as $\Delta(Q)$, it's calculated as $ \Delta(Q) = \max_{\text{neighboring } D,D'} \| Q(D) - Q(D')\|_2$, where neighboring datasets $D$ and $D'$ differ by a single element. 
}

\begin{definition}[Gaussian Mechanism]\label{def:gaussian}
The Gaussian mechanism takes as input a dataset 
$D\in \cX^N$,  a set of $m$ statistical queries $Q=\{q_1,\ldots, q_m\}$, and a privacy parameter $\rho>0$. And outputs $Q(D) + \cN\pp{0, \tfrac{\Delta(Q)^2}{2\rho}\cdot \mathbb{I}_m}$
%
The Gaussian mechanism satisfies $\rho$-zCDP \citep{bun2016concentrated}.
\end{definition}

We will also use the private selection algorithm \emph{report noisy max} \citep{durfee2019practical} with Gumbel perturbation noise, whose output distribution is identical to that of the exponential mechanism \citep{mcsherry2007mechanism}. Next, we state both definition is the context of answering statistical queries.


\begin{definition}[Exponential Mechanism]\label{def:exponential}
The Exponential mechanism $\mathcal{M}_E(D, Q,\varepsilon)$ takes as input a dataset $D\in \cX^N$, a set  a set of $m$ statistical queries $Q=\{q_1,\ldots, q_m\}$,  a vector of $m$ conjectured query answers $\hat{a}\in [0,1]^m$ for each query in $Q$, and a privacy parameter $\varepsilon$.
Then, for each query $q_i\in Q$, it defines a scoring function $S(q_i) =  |q_i(D) -  \hat{a}_i|$, where the function $S$ has sensitivity $\Delta(S) = \frac{1}{N}$.
Finally, it outputs an element $q_i\in Q$, with probability proportional to $\exp\pp{\frac{\varepsilon\cdot S(D, q_i)}{2\Delta(S)}},$
 The Exponential mechanism $\mathcal{M}_E(D,S,\mathcal{R},\varepsilon)$, satisfies $(\frac{\varepsilon^2}{8})$-zCDP \citep{cesar2021bounding}
 \end{definition}

\begin{definition}[Report Noisy Max With Gumbel Noise]\label{def:rnm} The Report Noisy Max mechanism $RNM(D,q,a,\rho)$ takes as input a dataset $D\in\mathcal{X}^N$, a set of $m$ statistical queries $Q=\{q_1,\ldots, q_m\}$, a vector of $m$ query answers $\hat{a}$, and a zCDP parameter $\rho$. It outputs the index of the query with highest noisy error estimate, $i^*=\arg\max_{i\in[m]}(|q_i(D)-a_i|+Z_i)$ where each $Z_i\sim{\rm Gumbel}(1/\sqrt{2\rho}N)$. 
The Report Noisy Max With Gumbel Noise $RNM(\cdot,q,a,\rho)$ satisfies $\rho$-zCDP.
\end{definition}

\section{Private Query Release via Synthetic Data} \label{sec:projmech}

For the problem of estimating statistics $Q(D)$ subject to differential privacy, we follow the framework of the projection mechanism \citep{NKL, DworkNT15}, which consists of perturbing the statistics with a differentially private mechanism followed by a \emph{projection step} that consists of finding a synthetic dataset that matches these perturbed statistics.

In this study, we consider two versions of the private projection mechanism. The first is the standard projection mechanism, referred to as the \emph{one-shot} version. The second is an \emph{adaptive} version of the projection mechanism, which proves useful in scenarios where a large number of queries are required. A comprehensive description of both frameworks is presented below.

\paragraph{One-shot projection mechanism:}
Suppose we are given $m$ statistical queries, represented by $Q$, and a dataset $D$. Then the first step consists of independently perturbing every coordinate of $Q(D)$ to obtain a private estimate, denoted by $\hat{a}$. Given a privacy parameter $\rho$, the noisy answer vector is computed as follows:
\begin{align}\label{eq:gaussianmech}
    \hat{a} \leftarrow Q(D) + \cN\pp{0, \tfrac{\Delta(Q)}{2\rho}\cdot \mathbb{I}}
\end{align}

Then, the \emph{project} step involves finding a synthetic dataset $\Dhat$ that best explains the estimate $\hat{a}$. To represent the projection step, we construct a loss minimization problem, where the loss function is defined by $\hat{a}$ and $Q$ as follows:
\begin{align}\label{eq:obj}
 L_{\hat{a}, Q} (\Dhat) =   \|\hat{a} - Q(\Dhat) \|_2^2
\end{align}

By the properties of the Gaussian mechanism and the post-processing property (Theorem \ref{lemma:post}), the projection mechanism framework satisfies  $\rho$-zCDP regardless of the optimization procedure used to solve \eqref{eq:obj}.

\paragraph{Adaptive selection framework:} In practice, when the number of queries $m$ is very large, the objective \eqref{eq:obj} becomes extremely noisy for any practical optimization routine. Therefore, we follow the Adaptive Selection Framework of \citet{liu2021iterative} that, over $K$ rounds, chooses high error queries adaptively using  the report noisy max mechanism (alternatively the exponential mechanism, as defined in Definition \ref{def:exponential}). Algorithm \ref{alg:adaptive} of Appendix \ref{appx:adaptive} shows the details of this framework, which we note satisfies the following theorem:

\begin{theorem}\label{thm:privacy} For any $K$, $\rho>0$, dataset $D$, query set $Q$ and any procedure used to solve objective \eqref{eq:obj},  the adaptive framework  strategy satisfies $\rho$-zCDP. By extension, it also satisfies $(\varepsilon, \delta)$-differential privacy for: $\varepsilon = \rho+2\sqrt{\rho\cdot \log(1/\delta)}$.
\end{theorem}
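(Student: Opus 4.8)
The plan is to establish $\rho$-zCDP for the adaptive selection framework (Algorithm~\ref{alg:adaptive}) by decomposing the mechanism into its privacy-consuming primitives and invoking composition and post-processing. I would first observe that the only steps that touch the sensitive dataset $D$ are (i) the $K$ invocations of report noisy max (Definition~\ref{def:rnm}) used to select high-error queries in each round, and (ii) the $K$ invocations of the Gaussian mechanism (Definition~\ref{def:gaussian}) used to privately answer the selected queries. Every other operation---solving the projection objective \eqref{eq:obj}, maintaining the running synthetic dataset, choosing which query pool to consider next---is a deterministic or randomized function of outputs already released, hence covered by post-processing (Lemma~\ref{lemma:post}).

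Next I would set up the budget accounting. Suppose the total budget $\rho$ is split as $\rho = \rho_{\text{sel}} + \rho_{\text{meas}}$, and within each of these across the $K$ rounds, say each round's selection step gets $\rho_{\text{sel}}/K$ and each round's measurement step gets $\rho_{\text{meas}}/K$ (the exact allocation used in the algorithm is immaterial to the statement, as long as the pieces sum to $\rho$). By Definition~\ref{def:rnm}, the $k$-th report-noisy-max call with Gumbel scale $1/(\sqrt{2\rho_{\text{sel}}/K}\,N)$ is $(\rho_{\text{sel}}/K)$-zCDP; by Definition~\ref{def:gaussian}, the $k$-th Gaussian measurement with the stated noise variance is $(\rho_{\text{meas}}/K)$-zCDP. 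I would then iterate the composition lemma (Lemma~\ref{lemma:comp}) $2K$ times: formally, treat the algorithm as a sequence of $2K$ sub-mechanisms, each of which, conditioned on all prior outputs $r$, is zCDP with the stated parameter for every fixed $r$ (this is exactly the adaptive form of Lemma~\ref{lemma:comp}). The composed mechanism is therefore $(\rho_{\text{sel}} + \rho_{\text{meas}})$-zCDP $= \rho$-zCDP. Because the projection/optimization step at the end of each round is pure post-processing of the noisy answers released so far, including it changes nothing, which gives the claimed ``for any procedure used to solve \eqref{eq:obj}'' uniformity.

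Finally, the $(\varepsilon,\delta)$-DP conclusion follows immediately by plugging $\rho$-zCDP into the zCDP-to-DP conversion (the ``zCDP to DP'' theorem stated just above Lemma~\ref{lemma:comp}), which yields $(\rho + 2\sqrt{\rho\log(1/\delta)}, \delta)$-DP, matching the stated $\varepsilon$.

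I do not expect a genuine obstacle here; the result is a bookkeeping argument. The one point that needs care---and the closest thing to a ``hard part''---is being precise that Lemma~\ref{lemma:comp} applies in the \emph{adaptive} setting: each sub-mechanism's input includes the transcript of all previous rounds (which queries were selected, what noisy answers were returned, what synthetic data was produced), so one must check that, for every fixed value of that transcript, the next selection or measurement step is zCDP with a parameter independent of the transcript. This holds because the Gumbel/Gaussian noise scales depend only on $N$, the fixed sensitivities ($1/N$ for the scoring function, and $\Delta(Q)$ for the query batch), and the pre-allocated per-round budgets---never on the realized data-dependent quantities. Making that independence explicit is the only step requiring attention; everything else is a direct application of the cited lemmas.
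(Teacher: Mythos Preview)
Your proposal is correct and follows essentially the same route as the paper's proof sketch: identify the $2KS$ privacy-touching calls (report noisy max and Gaussian mechanism), note each is $\rho/(2KS)$-zCDP, apply adaptive composition (Lemma~\ref{lemma:comp}) and post-processing (Lemma~\ref{lemma:post}), then convert via the zCDP-to-DP theorem. The only cosmetic discrepancy is that Algorithm~\ref{alg:adaptive} actually makes $T\cdot S$ (equivalently $K\cdot S$) calls to each primitive rather than $K$, but your remark that ``the exact allocation is immaterial as long as the pieces sum to $\rho$'' already absorbs this.
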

We provide a proof sketch to Theorem \ref{thm:privacy} below.
\begin{proof}
In the adaptive selection framework, the report noisy max (RNM, Definition \ref{def:rnm}) and Gaussian mechanism (GM, Definition \ref{def:gaussian}) are both called $K\cdot S$ times, each with a privacy budget of $\rho' = \rho / (2\cdot K\cdot S)$. As shown in Definition \ref{def:gaussian} and the work of \citet{durfee2019practical}, each individual call to RNM or GM satisfies $({\rho'})$-zCDP. Thus, by the composition properties outlined in \citet{bun2016concentrated}(see Lemma \ref{lemma:comp} ), the combination of $2\cdot K\cdot S$ $\rho'$-zCDP mechanisms satisfies $\rho$-zCDP.\end{proof}

\begin{algorithm}[!htb]
\caption{Private Genetic Algorithm for Synthetic Data ($\GSD$) \label{alg:GSD}}
\begin{algorithmic}[1]
\STATE \textbf{Require:} The search space $\cX^{N'}$ of mixed-type synthetic datasets with $N'$ rows and $d$ features, number of generations $G$, mutations population size $\Pmut$, crossover population size $\Pcross$ and elite  set size $E$.
\STATE \textbf{Input:} A set of $m$ queries $Q:\cX^* \rightarrow [0,1]^m$, (noisy) query answers $\hat{a}\in [0, 1]^m$.

\STATE Set the objective function for the synthetic datasets:
\begin{align*}
    L_{Q, \hat{a}}(\Dhat) = \| \hat{a} - Q(\Dhat)\|_2
\end{align*}

\STATE Initialize uniformly at random the elite set $\elite_1 \leftarrow \{\Dbar_{1, i} \sim U(\cX^{N'}): i \in [E]\}$ of synthetic datasets with $N'$ rows.
\FOR{$g = 1, \ldots, G$}
    
    \STATE Set $\Dhat^\star_{g} \leftarrow \argmin_{\Dhat \in \elite_{g}} L_{Q, \hat{a}}(\Dhat)$.

    \STATE Initialize candidate population set $\cP_g \leftarrow\elite_g$. 
    \STATE \textbf{Mutations: } Repeat $\Pmut$ times:  Initialize a synthetic data $\Dtil$ equal to $\Dhat_g^\star$. Then, sample a row $i\sim U([N'])$, feature $j\sim U([d])$ and value  $v \sim U\pp{\cX_j}$.  Update  $\Dtil$ as follows:
        $ \Dtil(i, j) \leftarrow v$ and add $ \Dtil$ to $\cP_g$.

    \STATE \textbf{Crossovers:} Repeat $\Pcross$ times: Initialize $\Dtil$ equal to $\Dhat_g^\star$. Then, sample two rows  $i_1, i_2\sim U([N'])$,  a feature $j\sim U([d])$, and elite member $\Dbar \sim U(\elite_g)$ and update  $\Dtil$ as follows:  $\Dtil(i_1, j)  \leftarrow \Dbar(i_2, j) $. Then add $\Dtil$ to $\cP_g$. 
        
    
    
    \STATE \textbf{Update Elites:} Evaluate each member of $\cP_g$ on $L_{Q, \hat{a}}(\cdot)$. Then, set $\elite_{g+1}$ as top $E$ members from population $\cP_g$  with respect to \emph{minimizing} objective $L_{Q, \hat{a}}(\cdot)$.
\ENDFOR
\STATE \textbf{return } $\Dhat_G^\star$
\end{algorithmic}
\end{algorithm}
\section{Private Genetic Synthetic Data (\privGA)}\label{sec:GSDdesc}

The generation of high-quality synthetic data through the projection mechanism framework, as described in \cref{sec:projmech}, presents a challenging task that necessitates the solution of an NP-hard optimization problem \citep{HT10}.  Therefore, we propose the \privGA{} algorithm, which solves the projection step in the projection mechanism using a genetic algorithm (GA). 

\subsection{Background on genetic algorithms}
\label{sec:gabackground}

Genetic Algorithms (GA) are a class of optimization algorithms inspired by natural selection and genetic recombination. Introduced by \citet{holland1992genetic}, GAs provide a heuristic search technique to solve complex optimization problems by evolving a population of candidate solutions towards an optimal or near-optimal solution. A crucial advantage of GAs in this work is that they do not require differentiability of the optimization objective.

GAs offer several benefits for optimization problems; however, GAs do not represent a one-fit-all solution since they require a suitable problem representation (i.e., encoding) and choice of genetic operators to work effectively. For instance, the choice of problem representation can significantly affect the GA's computational efficiency because they typically require multiple iterations and evaluations of the objective function, which can be time-consuming. Furthermore, failure to choose suitable genetic operators could lead to poor candidates or a lack of diversity in the population, making the GA suffer from slow convergence or convergence to the local minimum.

\subsection{\GSD{}}

This section presents a genetic algorithm named  Private Genetic Synthetic Data (\GSD), developed specifically for synthetic data generation.
\GSD{} applies two genetic operators known as `mutations' and `crossover' in the GA literature (\citep{such2017deep}). 
In contrast to prior genetic algorithms, the mutation and crossover operator applied by \GSD{} are specifically designed for searching the space of synthetic datasets and optimizing the projection step  described in \cref{sec:projmech}. 
These operators, aimed at generating incremental improvements in synthetic datasets across multiple generations, have been designed for computational efficiency and quick convergence.  We begin by unpacking the core elements and structure of \GSD{} and then dive into the specifics of these genetic operators.

The \GSD{} algorithm operates on a set of statistical queries, denoted as $Q$, and their respective target private responses, $\hat{a}$. It formulates an optimization problem via the objective function $L_{Q,\hat{a}}:\cX^{N'}\rightarrow [0,1]$. A proposed solution to $L_{Q,\hat{a}}$ is a synthetic dataset comprising a user-specified number of rows, $N'$. Thus, \GSD's ultimate objective is to identify a synthetic dataset within $\mathcal{X}^{N'}$ that minimizes $L_{Q,\hat{a}}$. \GSD{} also incorporates other parameters such as the maximum number of generations, $G$, over which it will operate, as well as $\Pmut$ and $\Pcross$, which respectively represent the quantity of proposed solutions generated using mutation and crossover strategies. Lastly, the $E$ parameter determines the number of candidates that will be selected to parent the succeeding generation.

The genetic process unfolds across $G$ generations. For each generation, denoted as $g\in [G]$, \GSD{} maintains a set of elite $E$ candidate solutions, symbolized as $\elite_g$. These represent the optimal $E$ candidate solutions discovered thus far. The initial elite set, $\elite_1$, is chosen at random from the set $\cX^{N'}$. Within each generation, \GSD{} generates $(\Pmut+\Pcross)$ new candidate solutions. These are obtained by introducing minor random variations to the best synthetic data identified up to generation $g$, denoted as $\Dhat^\star_g$. To achieve this, \GSD{} applies a random genetic modification—either mutation or crossover—to $\Dhat^\star_g$, yielding a new synthetic data candidate, $\Dtil$. This process allows for inheriting traits from the elite synthetic dataset and adds diversity, paving the way for potentially superior solutions.

The group of candidate synthetic datasets in generation $g\in[G]$, denoted as $\cP_g$, comprises the $\Pmut+\Pcross$ new candidates and the elite candidates $\elite_g$. \GSD{} assigns a score to each candidate in $\cP_g$ based on their alignment with the objective function $L_{Q, \hat{a}}$. It then ranks all $\Pmut+\Pcross+E$ individual candidates according to their scores, and the top $E$ candidates with the highest fitness scores are chosen as the next generation of elite candidates. This iterative procedure continues until a predetermined termination criterion—such as reaching the maximum number of generations, achieving a satisfactory fitness level, or meeting a predefined condition—is satisfied.

Now, we delve into the specifics of the `mutation' and `crossover' genetic operators, which play a significant role in \GSD's performance. These operators are only applied to the optimal dataset $\Dhat_g^\star$, with the goal of incremental improvements over many generations. Moreover, both operators only modify a single value of $\Dhat_g^\star$. Empirically, we find that having sparse genetic updates is crucial to the success of \GSD, as shown in \cref{tab:parameters} where altering numerous values simultaneously performs noticeably worse.

\paragraph{Mutation.}
The mutation process is a vital part of a genetic algorithm. It introduces minor random perturbations to the most optimal solution to create novel candidate solutions. Controlled by parameter $\Pmut$, the mutation strategy of the \GSD{} algorithm determines the number of candidate synthetic datasets produced in each generation.
During the mutation phase, a row $i \in [N']$ and column $j \in [d]$ are randomly selected. The entry in $\Dhat^\star_g$ corresponding to the selected row $i$ and column $j$ is then replaced by a value sampled uniformly from the range of acceptable values for column $j$. This modified dataset, which differs from $\Dhat^\star_g$ by a single entry,\footnote{While we could adjust the number of mutated rows at each step, our empirical data shows that in our settings, mutating a single row often yields the best performance (see Table 1).} is subsequently added to the pool of candidate solutions.

\paragraph{Crossover.}
Taking inspiration from genetic recombination, the crossover operator merges the parameters of two parent candidates to generate a new offspring. Parameter $\Pcross$ determines the number of candidate synthetic datasets produced in each \GSD{} generation via the crossover strategy.
To sample a new candidate synthetic dataset, the \GSD's crossover operation selects an elite dataset $\Dbar$ from the elite set $\elite_g$. It then combines the parameters of $\Dbar$ with those of the optimal dataset $\Dhat^\star_g$ to produce a new dataset $\Dtil$. This synthetic dataset, differing from $\Dhat^\star_g$ by a single entry, is subsequently added to the candidate solutions population, $\cP_g$.
The crossover operation combines the parameters of $\Dhat^\star_g$ and $\Dbar$ by selecting two rows $i_1, i_2\in[N']$, and a feature $j\in[d]$. The new dataset $\Dtil$ then has its $(i_1, j)$ entry set to match the value of $\Dbar$ at row $i_2$ and feature $j$ (i.e., $\Dtil(i_1, j) = \Dbar(i_2, j)$), while all other entries of $\Dtil$ mirror the corresponding values at $\Dhat^\star_g$.

The crossover operator plays an integral role in improving the convergence rate of \GSD. \cref{fig:convergence} shows the convergence of the \GSD{} algorithm with and without the crossover operator. By encouraging the replication of significant values (with respect to the fitness function) in subsequent generations, \GSD{} with the crossover step converges exponentially faster compared to an algorithm reliant solely on mutations.

\begin{figure}
    \centering
    \ificml
    \includegraphics[width=0.5\linewidth]{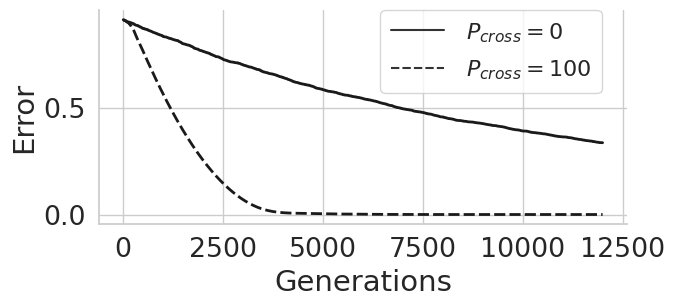}
    \else
    \includegraphics[width=0.40\linewidth]{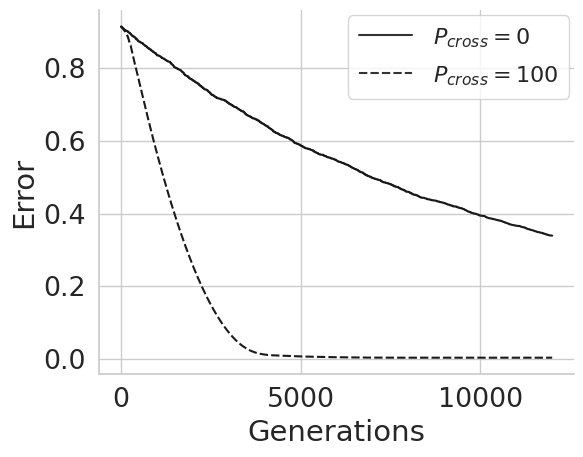}
    \fi
     \caption{
     A comparison of the convergence rates of \GSD{} under two different parameterizations, aiming to understand the impact of the \emph{crossover} genetic operator. 
     The accompanying plot illustrates the error of \GSD{} after each generation, in relation to matching the histogram distribution (1-way marginals) of the input data.  The the solid line represents parameters $\Pmut=200$ and $\Pcross=0$, while the dotted line corresponds to $\Pmut=100$ and $\Pcross=100$.
     The experiment employs an artificially created dataset, intentionally designed to be sparse and of high cardinality, thus challenging to approximate. The dataset comprises five categorical features, each with a cardinality of $1000$, and contains $2000$ rows.
     }
\label{fig:convergence}
\end{figure}

\subsection{Privacy and Accuracy}

Below, we state the formal privacy guarantee of \privGA{}, which directly follows from \cref{thm:privacy}.

\begin{corollary}\label{thm:privgaguarantee} 
For any $\rho>0$, dataset $D$, and query set $Q$.
Running \privGA{} (\cref{alg:GSD}) under the adaptive framework satisfies $\rho$-zCDP and  $(\varepsilon, \delta)$-differential privacy for: $\varepsilon = \rho+2\sqrt{\rho\cdot \log(1/\delta)}$.
\end{corollary}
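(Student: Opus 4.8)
The plan is to show that Corollary~\ref{thm:privgaguarantee} is essentially an immediate consequence of the privacy analysis of the adaptive selection framework, Theorem~\ref{thm:privacy}, together with the post-processing property of zCDP (Lemma~\ref{lemma:post}). The key observation is that \privGA{} (Algorithm~\ref{alg:GSD}) only ever touches the sensitive dataset $D$ through the noisy query answers $\hat a$ it receives as \emph{input}; every operation inside the genetic loop---initializing the elite set, performing mutations and crossovers, evaluating $L_{Q,\hat a}(\cdot)$, and selecting the top $E$ candidates---depends on $D$ only via $\hat a$ and on internal randomness, never on $D$ directly. Hence the entire execution of \privGA{} is a (randomized) post-processing of the output of the adaptive framework.

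Concretely, I would proceed in three steps. First, invoke Theorem~\ref{thm:privacy}: for any $K$ and any $\rho>0$, the adaptive selection framework---which is the only component that queries $D$, via the report-noisy-max and Gaussian mechanisms---produces its outputs (the selected queries and their noisy answers $\hat a$) while satisfying $\rho$-zCDP, for \emph{any} procedure used to solve the projection objective~\eqref{eq:obj}. Second, observe that \privGA{}, run as the projection subroutine, is exactly such a procedure: as a map from $(Q,\hat a)$ (plus fresh randomness independent of $D$) to a synthetic dataset $\Dhat_G^\star$, it is a randomized function $f$ of the framework's output that does not access $D$. By Lemma~\ref{lemma:post}, composing the $\rho$-zCDP adaptive mechanism with $f$ preserves $\rho$-zCDP. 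Third, apply the zCDP-to-DP conversion theorem to conclude $(\varepsilon,\delta)$-DP with $\varepsilon = \rho + 2\sqrt{\rho\log(1/\delta)}$, matching the stated bound.

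There is really no substantive obstacle here---the ``hard part,'' such as it is, is simply being careful about bookkeeping: one must confirm that \emph{nothing} in Algorithm~\ref{alg:GSD} secretly reads $D$. In particular, the objective $L_{Q,\hat a}(\Dhat) = \|\hat a - Q(\Dhat)\|_2$ evaluates $Q$ on the \emph{synthetic} dataset $\Dhat$, not on $D$, so its computation is privacy-free; and the elite initialization draws from $U(\cX^{N'})$, which is data-independent. Once this is verified, the corollary follows by a one-line application of post-processing and composition, exactly as the sentence preceding the statement (``which directly follows from Theorem~\ref{thm:privacy}'') suggests. I would present the argument as a short paragraph rather than a displayed multi-step proof, since invoking Theorem~\ref{thm:privacy} already absorbs the composition accounting over the $K$ adaptive rounds.
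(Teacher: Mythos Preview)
Your proposal is correct and matches the paper's approach exactly: the paper states that the corollary ``directly follows from \cref{thm:privacy}'' without giving a separate proof, and your argument---invoke Theorem~\ref{thm:privacy} (whose statement already covers any projection procedure), note that \privGA{} accesses $D$ only through the noisy answers $\hat a$ and hence is post-processing, then apply the zCDP-to-DP conversion---is precisely the intended one-liner.
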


Given that \GSD{} falls within the `projection mechanism' framework proposed by \citet{NKL}, an accuracy analysis can be performed under several assumptions: (1) the input data are discrete, (2) the number of queries is finite, and (3) the optimization step invariably finds the optimal solution of the objective. This kind of analysis assumes access to a perfect optimization oracle and is often referred to as `oracle-efficient' accuracy analysis. It follows a similar approach to previous studies by \citet{gaboardi2014dual, vietri2022private, aydore2021differentially}.

The accuracy statement for \GSD{} is presented in \cref{thm:accuray}, with a detailed proof provided in the \cref{thm:accuray}. Our analysis of \GSD's accuracy is similar to the accuracy analysis of the \RAP{} mechanism \citep{aydore2021differentially}, which also uses the projection mechanism framework and adopts the `oracle-efficient' assumption. However, unlike \RAP, which implements a relaxation step in the data domain to maintain objective differentiability, \GSD{} does not perform such a step. Consequently, our accuracy theorem applies to a general class of queries, whereas the one for \RAP{} is limited to $k$-way marginal queries.

     

\begin{theorem}\label{thm:accuray}
For a discrete data domain $\cX$ and any dataset $D\in\cX^N$ with $N$ rows.
Fix privacy parameters  $\epsilon, \delta>0$, the synthetic dataset size $N'$, and any set of $m$ queries $Q:\cX^*\rightarrow[0,1]^m$. If the one-shot \privGA{} mechanism solves the minimization in the projection step exactly, then 
the one-shot \privGA{} mechanism outputs a synthetic data $\Dhat$ with
 average error bounded as
\begin{align*}
  \sqrt{\tfrac{1}{m} \| Q(D) - Q(\Dhat) \|_2^2} \leq 
   O\pp{
   \frac{
    \pp{(\pp{\log(|\cX|) + \log(1/\beta)}\ln(1/\delta)}^{1/4}
   }{\sqrt{\epsilon\cdot N }}
   + \frac{\sqrt{\log(m)}   }{\sqrt{N'}}
   }
\end{align*}
\end{theorem}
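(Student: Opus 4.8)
The plan is to run the ``projection mechanism'' argument of \citet{NKL} (as in \citet{aydore2021differentially}), but with one twist: the geometric quantity that controls the injected noise should be the \emph{number of vertices of the polytope of achievable query answers}, which is $\le|\cX|$, rather than the ambient dimension $m$. Throughout write $a := Q(D)$, let $w\sim\cN(0,\sigma^2\mathbb{I})$ be the noise added in \eqref{eq:gaussianmech} so that $\hat a = a+w$, and let $\Dhat$ be the exact minimizer of $L_{\hat a,Q}$ over $\cX^{N'}$. All statements below hold with probability at least $1-\beta$ over $w$ (and over the sampling in the benchmark step).

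First, bookkeeping. A single-row change alters each statistical query by at most $1/N$, so $\Delta(Q)\le\sqrt m/N$ and hence $\sigma^2=\Delta(Q)^2/(2\rho)\le m/(2\rho N^2)$; converting the target $(\epsilon,\delta)$-DP guarantee to $\rho$-zCDP via the zCDP-to-DP relation lets us take $\rho=\Omega(\epsilon^2/\log(1/\delta))$, so $\sigma=O\big(\sqrt{m\log(1/\delta)}/(\epsilon N)\big)$. Next, the benchmark: drawing $N'$ rows i.i.d.\ from $D$ and applying Hoeffding's inequality together with a union bound over the $m$ queries shows there exists $\tilde D\in\cX^{N'}$ with $\|Q(D)-Q(\tilde D)\|_\infty = O(\sqrt{\log m/N'})$; set $a':=Q(\tilde D)$ and $\eta:=\|a-a'\|_2\le\sqrt m\cdot O(\sqrt{\log m/N'})$. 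This $\eta/\sqrt m$ will be the second summand of the bound.

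The crucial step is the geometry. Every achievable answer vector $Q(\Dbar)$ with $\Dbar\in\cX^{N'}$ is the average of its $N'$ single-record vectors, each of which lies in $\{Q(\{x\}):x\in\cX\}\subseteq\{0,1\}^m$; hence $Q(\Dbar)$ lies in the polytope $P:=\mathrm{conv}\{Q(\{x\}):x\in\cX\}$, which has at most $|\cX|$ vertices and does not depend on $N'$. In particular $Q(\Dhat),a'\in P$, so $\langle w,\,Q(\Dhat)-a'\rangle\le\sup_{p,p'\in P}\langle w,p-p'\rangle = \max_x\langle w,Q(\{x\})\rangle-\min_x\langle w,Q(\{x\})\rangle$, and each $\langle w,Q(\{x\})\rangle$ is mean-zero Gaussian with variance at most $m\sigma^2$; a union bound over these $\le 2|\cX|$ Gaussians gives $\langle w,\,Q(\Dhat)-a'\rangle = O\big(\sigma\sqrt{m(\log|\cX|+\log(1/\beta))}\big)$. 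Now combine this with optimality: since $\Dhat$ minimizes $\|\hat a-Q(\cdot)\|_2$ and $\tilde D$ is feasible, $\|\hat a-Q(\Dhat)\|_2^2\le\|\hat a-a'\|_2^2$; expanding both sides around $a'$ and using $\hat a=a+w$ with $\|a-a'\|_2\le\eta$ yields
\[
\|a'-Q(\Dhat)\|_2^2 \;\le\; 2\,\langle w,\,Q(\Dhat)-a'\rangle \;+\; 2\eta\,\|a'-Q(\Dhat)\|_2 .
\]
Plugging in the noise bound and solving this quadratic in $t=\|a'-Q(\Dhat)\|_2$ gives $t = 2\eta + O\big((\sigma^2 m(\log|\cX|+\log(1/\beta)))^{1/4}\big)$. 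A final triangle inequality $\|Q(D)-Q(\Dhat)\|_2\le\eta+t$, division by $\sqrt m$, and substitution of $\sigma$ and $\eta$ from the earlier steps reproduce the claimed bound.

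The main obstacle is precisely this geometric step. The naive analysis bounds the noise contribution by $\|w\|_2=\Theta(\sigma\sqrt m)$, which after dividing by $\sqrt m$ leaves a term scaling like $\sqrt m$ (through $\sigma$) and never produces the $(\log|\cX|)^{1/4}$ factor. Getting the stated rate requires noticing that the refined projection inequality only ever pairs $w$ against \emph{differences of achievable answer vectors}, all of which live in the fixed, low-vertex-count polytope $P$, so the effective width seen by the noise is a maximum of $|\cX|$ Gaussians rather than $\|w\|_2$; carefully tracking this maximum and the associated failure probability $\beta$ is where the real work lies. (One small subtlety to verify: $\Dhat$ is the argmin over the finite set $\cX^{N'}$, not over $P$; but the expansion above uses only feasibility of $\tilde D$, so it goes through verbatim.)
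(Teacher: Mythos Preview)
Your proof is correct. The paper's own argument is much shorter: it invokes the projection-mechanism guarantee of \citet{NKL} as a black box for the minimizer $\Dhat^*$ taken over arbitrary-size datasets $\cX^*$, then cites Lemma~3.7 of \citet{BLR08} for the existence of a size-$N'$ dataset whose answers are $\sqrt{\log(m)/N'}$-close in $\ell_\infty$ to $Q(\Dhat^*)$, and finishes with a triangle inequality. You instead unpack the NKL argument inline---the same polytope $P=\mathrm{conv}\{Q(\{x\}):x\in\cX\}$ with at most $|\cX|$ vertices, the same Gaussian-max bound over vertices yielding the $\log|\cX|$ factor---but you feed the size-$N'$ sampled benchmark $\tilde D$ \emph{directly} into the optimality inequality for $\Dhat$ over $\cX^{N'}$ and expand around $a'=Q(\tilde D)$. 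This is more work but is fully self-contained, and it cleanly sidesteps a wrinkle in the paper's write-up: the paper's final triangle inequality needs a bound on $\|Q(\Dhat^*)-Q(\Dhat)\|$, yet the text only establishes that the \emph{sampled} dataset is close to $\Dhat^*$ and never uses optimality of the actual output $\Dhat$ to close the loop (the symbol $\Dhat$ is reused for both objects). Your expansion of $\|\hat a-Q(\Dhat)\|_2^2\le\|\hat a-Q(\tilde D)\|_2^2$ followed by the quadratic in $t=\|a'-Q(\Dhat)\|_2$ is precisely the connective step that makes the role of optimality explicit; the paper outsources it to the NKL citation.
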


\begin{remark}
\cref{thm:accuray} only applies if the underlying optimization algorithm in \cref{alg:GSD} is successful in minimizing the objective in \cref{eq:obj}. Therefore, we accompany the accuracy theorem with empirical evidence that validates the performance of \privGA{} across various datasets.    In \cref{fig:oneshot_2way_cat}, we run the one-shot version of \privGA{}  on various discrete datasets to match the set of all $2$-way marginal statistics. For more details see 
\cref{sec:experiments}.
\end{remark}

\section{Experiments}\label{sec:experiments}

We conduct experiments to compare the performance of \privGA{} against baseline mechanisms in various settings. In particular, we evaluate the performance of synthetic data mechanisms on mixed-typed data, which contain both categorical and numerical attributes. Our results demonstrate that in this setting, \privGA{} is a strong mechanism in approximating various types of statistical queries, outperforming \rappp{}, the state-of-the-art method for generating synthetic data with real-valued columns. In addition, we demonstrate that \privGA{} achieves performance comparable to that of mechanisms specifically designed for---and restricted to---handling differentiable query functions for discrete data.





\subsection{Data}\label{sec:data}

For our empirical evaluation, we use datasets derived from the Folktables package \cite{ding2021retiring}, which defines datasets using samples from the American Community Survey (ACS). The ACS is collected annually by the U.S. Census Bureau to capture the demographic and economic characteristics of individuals and households in the country. In particular, Folktables provides an array of classification tasks, which are defined by a target column, a collection of mixed-type features, and some filter that specifies what samples to include.\footnote{For example, a task can be defined to include only rows with individuals over the age of 18.}
%
%
The experiments in this section use samples from California and the five ACS tasks listed in \cref{tab:data_stats}. Finally, each dataset is normalized by linearly scaling real-valued columns to lie in $[0,1]$.

\begin{table}[ht]
\centering
\label{tab:data_stats}
\begin{tabular}{l | c c c }
    \toprule
    Dataset & N & \# Real & \# Cat. \\
    \midrule
    \acsmobdata{CA} & 64263 & 4 & 17 \\
    \acsempdata{CA} & 303054 & 1 & 16 \\
    \acsincdata{CA} & 156532 & 2 & 9 \\
    \acscovdata{CA} & 110843 & 2 & 17 \\
    \acstradata{CA} & 138006 & 2 & 14 \\
    \bottomrule
\end{tabular}
\caption{For each dataset, we list the number of rows $N$, as well as the number of real and categorical features.}
\end{table}



\subsection{Statistical queries} 

Statistical queries are essential for data analysis and decision-making, as they allow us to extract meaningful insights from large and complex datasets. In this section, we define the four sets of statistical queries that we use in our experiments. We summarize these queries in \cref{tab:queries}  and describe them in more detail below:

\paragraph{Random Prefixes:} 
The first query class we consider is a random set of $50,000$ prefix queries, as in \cref{def:prefix}.
To generate a random prefix query, we randomly sample a categorical column $c$ and two numeric columns $a, b$. Then, uniformly sample a target value $v$ for column $c$ and the two thresholds $\tau_a,\tau_b\in[0,1]$ for the numeric columns $a$ and $b$, respectively. The corresponding prefix query is defined as follows:
\begin{align*}
    q_{(c, v),  (a, \tau_a), (b, \tau_b)}(D) = \sum_{x\in D}\mathbb{I}\{x_c = v \text{ and } x_{a}< \tau_a \text{ and } x_b < \tau_b\}
\end{align*}
%

\paragraph{Random Halfspaces}
Next, we sample $m=200,000$ random halfspace queries as in \cref{def:halfspace}. Each sample is generated as follows:   Let $d'$ be the dimension of the one-hot encoding of $\cX$, determined by the one-hot encoding function $h_{oh}(\cdot)$. First, sample a random vector ${\theta}\in \mathbb{R}^{d'}$, where the value of each coordinate $i\in [d']$ is an i.i.d sample drawn from the normal distribution with variance $\tfrac{1}{d}$ (i.e., ${\theta}_i \sim \cN(0, \tfrac{1}{d})$). Then, sample a threshold value $\tau$ from the standard normal distribution, i.e., $\tau\sim \cN(0,1)$. Finally, the corresponding query   $q_{\theta, \tau}$ is defined as follows:
\begin{align*}
    q_{\theta, \tau}(D) =\sum_{x\in D} \mathbb{I}\{\langle \theta, h_{oh}(x)\rangle \leq  \tau\}
\end{align*}



\paragraph{$k$-way Categorical Marginals:}
Using categorical columns only, we select all $2$-way marginals in the one-shot setting and all $3$-way marginal queries in the  adaptive setting. Since the query answers in a workload sum up to $1$, each workload has $\ell_2$-sensitivity $\Delta_2 = \sqrt{2}$.

\paragraph{$k$-way Binary-Tree Marginals:}
In this work we introduce a class of queries, which we coined \emph{Binary-Tree Marginals}. The Binary-Tree marginals consists of a collection of range queries, as in \cref{def:range_marginals}) defined over various levels. 


We construct range queries using one categorical attribute and $k-1$ real-valued attribute. Recall that real-valued features lie in the domain $[0,1]$. Thus, we define a workload of range queries by the set of intervals of width $\frac{1}{2^j}$ for $j \in \{1, 2, \ldots, 5\}$.
Now consider the set
\begin{align*}
    I = \{ \pp{\tfrac{i}{2^j}, \tfrac{i + 1}{2^j}}| 0\leq i< 2^j , j \in \{1, 2, \ldots, 5\} \}
\end{align*}
Using the same notation from definition \ref{def:range_marginals}, for real-valued feature $c$, the corresponding range workload  $Q_{c, I} =\{ q_{c, (a,b)}\}_{(a, b) \in I}$ is the set of range queries defined by intervals in $I$. This construction simulates the binary mechanism of \citet{chan2011private} for releasing sum prefixes privately.

Note that a single data point only appears in one interval of size $1/2^j$ for each $0< j \leq 5$. Therefore, adding a new data point only affects the counts of $5$ range queries in the workload $Q_{c, I}$.
By the same logic, replacing a data point affects $10$ range queries in  $Q_{c, I}$. Therefore, we say that the workload  $Q_{c,I}$ has $\ell_2$-sensitivity $\sqrt{10}$.


\begin{table}[!htb]
\centering
\begin{tabular}{l |c c}
    \toprule
    Query class & Query set size & $\Delta_2$ \\
    \midrule
    2-way Categorical & ${\binom{d}{2}}$ & $\sqrt{2}$ \\  
    \midrule 
    3-way Categorical & ${\binom{d}{3}}$ & $\sqrt{2}$ \\  
    \midrule 
    2-way Binary-Tree & $d_{cat} \cdot d_{num}$ & $\sqrt{10}$ \\
    \midrule
    Random Prefix & $200000$  & $1$ \\  
    \midrule
    Random Halfspaces & $200000$  & $1$ \\  
    \bottomrule
\end{tabular}
\caption{The total number of queries depends on the input data total number of features, which we denote by $d$ and  the number of categorical and numerical features, $d_{cat}$ and $d_{num}$ respectively. In addition, we list the $\ell_2$-sensitivity $\Delta_2$ of each query class (the sensitivity for categorical and range marginals is written with respect to an entire workload of queries). Note that in our experiments, the number of categorical attributes comprising both $2$- and $3$-way range marginals is always set to $1$.}
\label{tab:queries}
\end{table}


\subsection{Baselines}

\paragraph{\rappp.}

We compare our mechanism \privGA{} against the mechanism \rappp{} by \citet{vietri2022private}. The \rappp{} algorithm uses a first-order optimization algorithm called {\em Sigmoid Temperature Annealing} (pseudocode can be found in Appendix \ref{appx:rappp}). Recall that \rappp{} replaces the prefix queries with a differentiable surrogate query by using Sigmoid functions. Moreover, to control the degree by which the Sigmoid functions approximate the prefix functions, it uses the {\em inverse Sigmoid temperature parameter} $\invtemp$. The algorithm starts with a small inverse temperature $\invtemp$ and runs a gradient-based minimizer (SGD or Adam) on the induced optimization objective with learning rate \LR, and then repeats the process for a different choice of inverse temperature. 




\paragraph{\rp{} / \gn{}.}

We also compare our method to \rap\footnote{We use the softmax variant proposed by \citet{liu2021iterative}, which has been shown to perform much better that the versions originally proposed in \citet{aydore2021differentially}. We note in \citet{vietri2022private}, references to \rap{} also refer to this softmax version.} \citep{liu2021iterative, aydore2021differentially} and \gem{} \citep{liu2021iterative}, two gradient-based optimization algorithms for generating private synthetic data that belong to discrete data domains. In both algorithms, datasets are modeled as mixtures of product distributions $P$ over the attributes, with the main difference being the way in which such distributions are parameterized---in \gem{}, $P$ is parameterized by a neural network that given Gaussian noise, outputs a set of product distributions, while in \rap{}, $P$ is parameterized directly by the probabilities defining the distribution for each attribute. Because both methods are limited to discrete data, real-valued columns must be discretized first. Consequently, \rap{} and \gem{} can only optimize over $k$-way categorical and binary-tree marginals. In \cref{appx:rapgem}, we show how we modify these methods to optimize over binary-tree queries and also provide additional details about their optimization procedures. Going forward, we denote the one-shot version of \rap{} and \gem{} by \rp{} (RelaxedProjection) and \gn{} (GenerativeNetwork) respectively.

\paragraph{\pgmem.}

Lastly, we compare to a graphical model approach, \pgm{} \citep{mckenna2019graphical}, which like \rap{} and \gem{}, is also limited to discrete data domains. However, \pgm{} instead models data distributions using probabilistic graphical models. We run an adaptive version of \pgm{} that also uses the exponential mechanism. We refer to this variant as \pgmem{} and run it on $k$-way categorical marginals.

\begin{figure}[!htb]
    \centering
    \ificml 
    \includegraphics[width=0.99\linewidth]{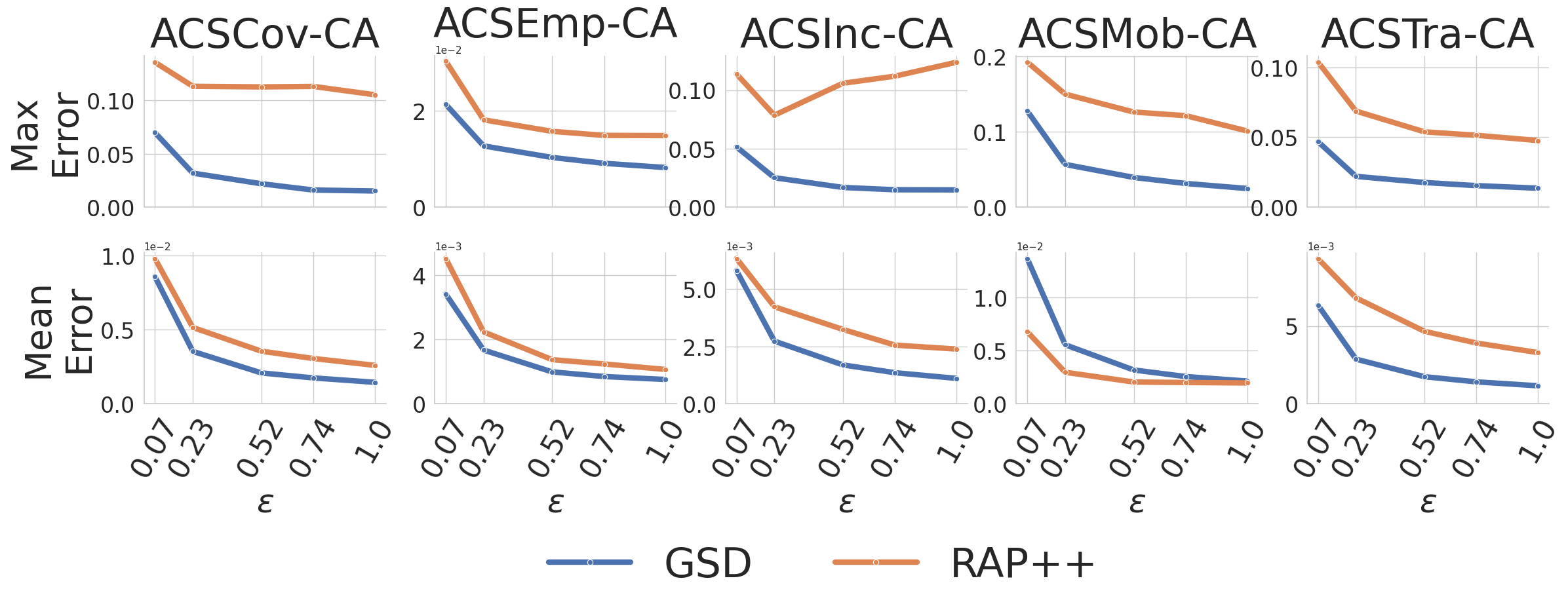}
    \else 
    \includegraphics[width=0.99\linewidth]{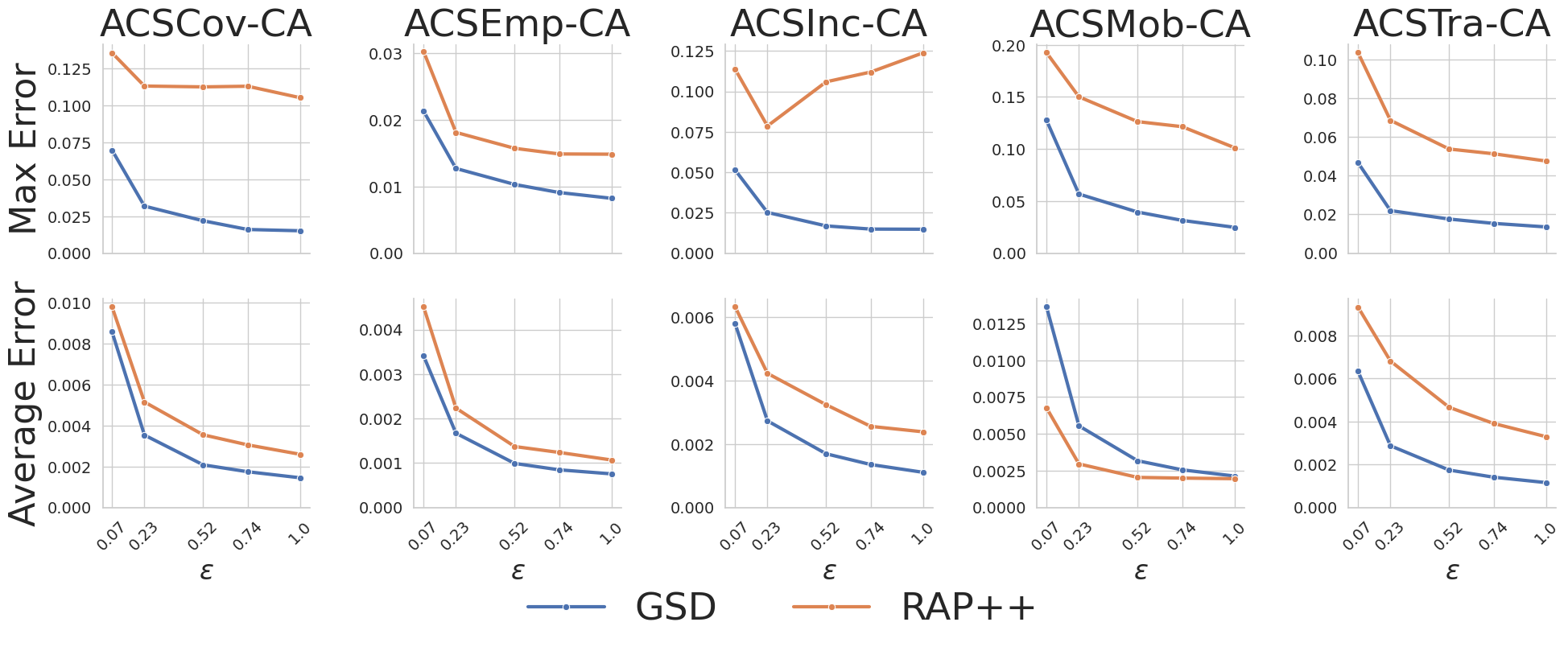}
    \fi
    \caption{\textbf{Prefix Queries:} Max and mean error evaluated on all ACS tasks with non-differentiable queries--- 200K random prefixes. 
    Using adaptive version of \GSD{} and \rappp{}. 
    }
    \label{fig:errors_prefix}
\end{figure}

\begin{figure}[!htb]
    \centering
    \ificml
    \includegraphics[width=1\linewidth]{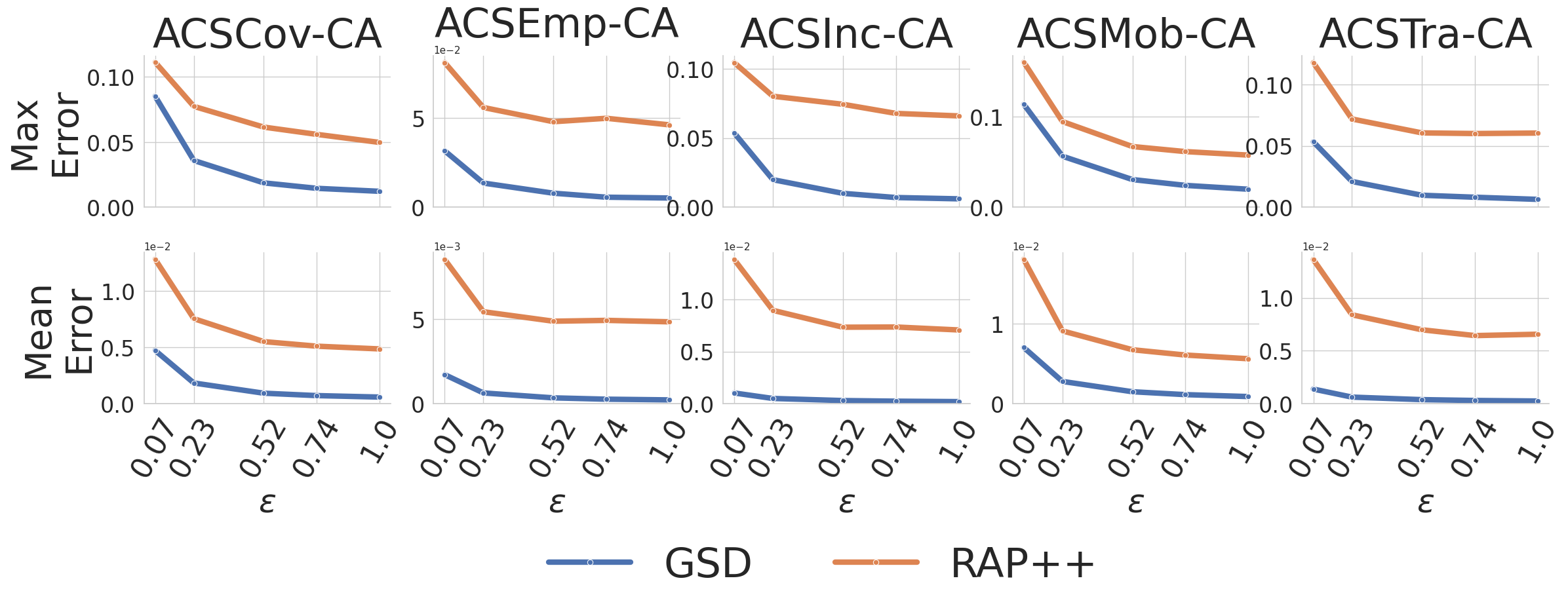}
    \else 
    \includegraphics[width=1\linewidth]{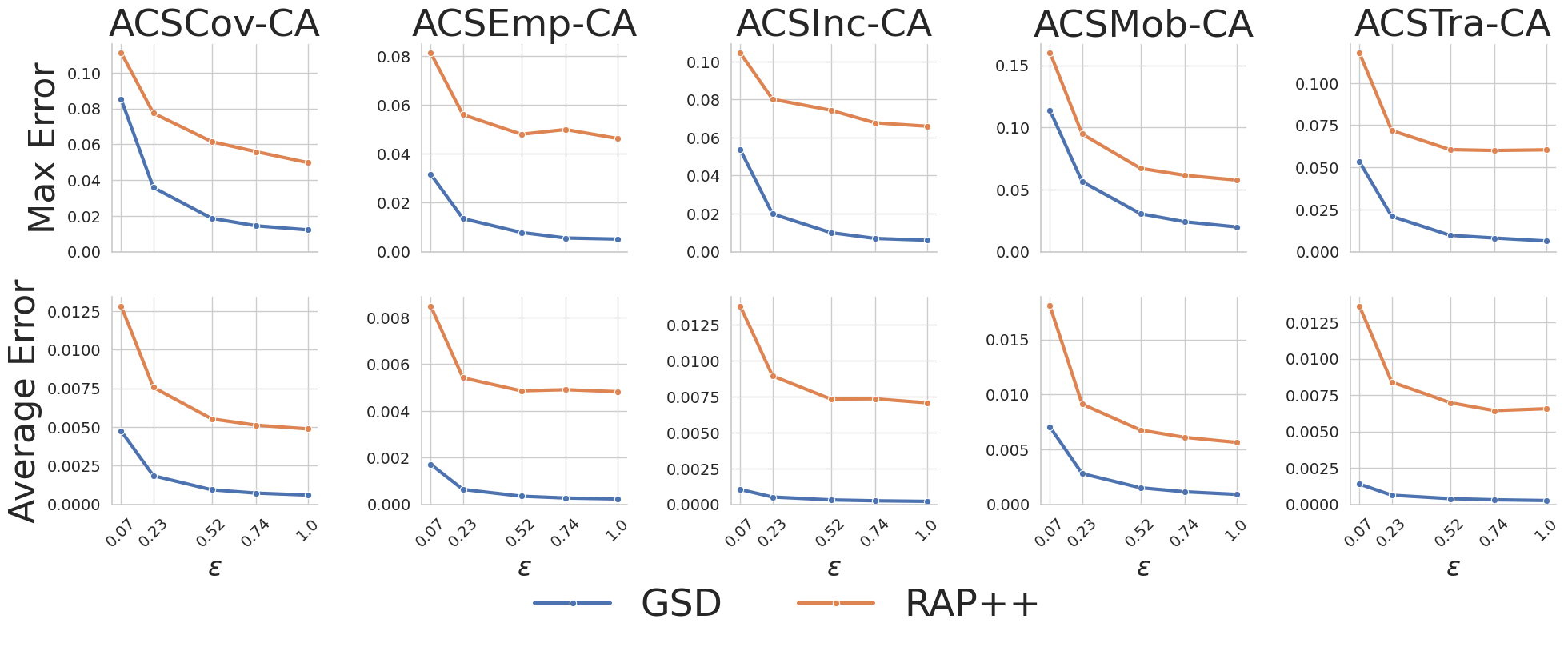}
    \fi
    \caption{\textbf{Halfspace Queries:} Max and average error evaluated on all ACS tasks with non-differentiable queries--- 200K random halfspaces. 
    Using adaptive version of \GSD{} and \rappp{}.
    }
    \label{fig:errors_halfspace}
\end{figure}

\begin{figure}[!htb]
    \centering
    \ificml
    \includegraphics[width=1\linewidth]{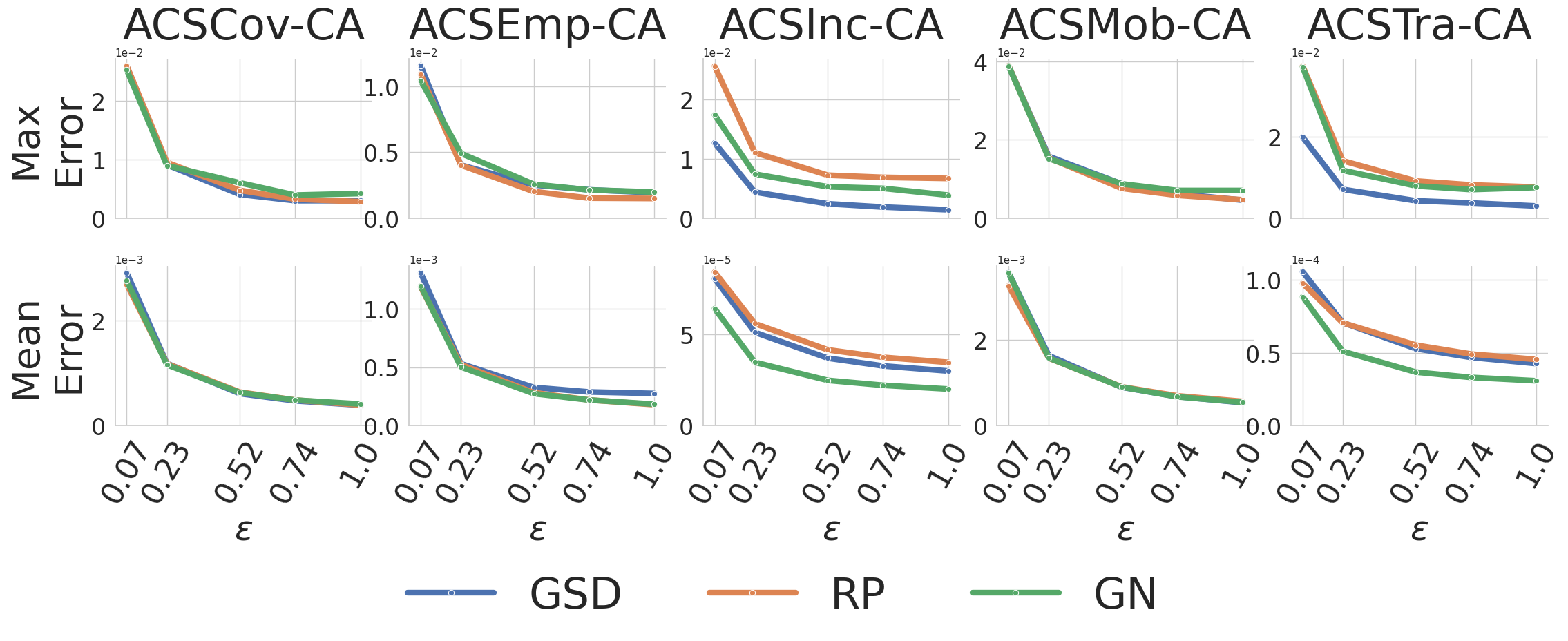}
    \else
    \includegraphics[width=1\linewidth]{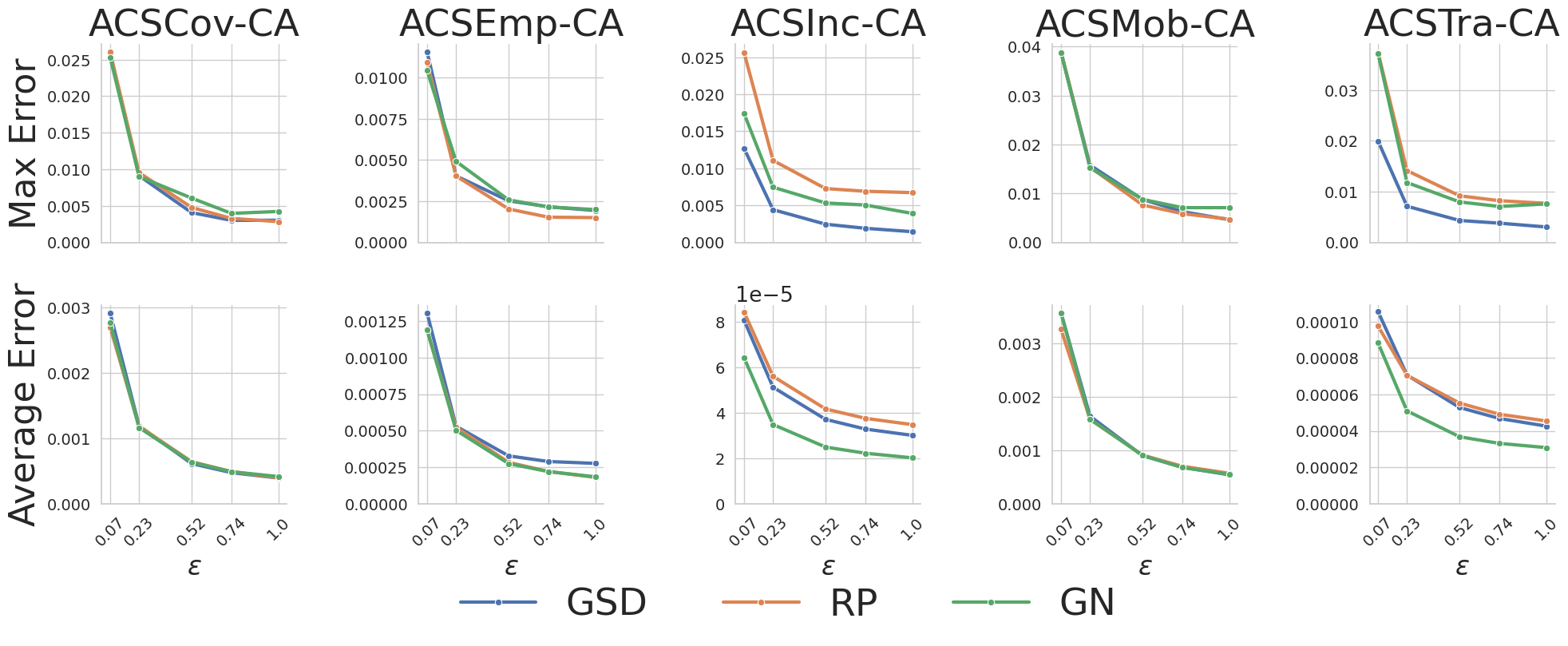}
    \fi
    \caption{
    \textbf{2-way Categorical Marginals: }
    For this experiment, we ran the one-shot version of \privGA{} and \rap{}. The plot shows  the max and average error evaluated on all ACS tasks with differentiable queries--- 2-way categorical marginal queries.
    }
    \label{fig:oneshot_2way_cat}
\end{figure}

\begin{figure}[!htb]
    \centering
    \ificml
    \includegraphics[width=1\linewidth]{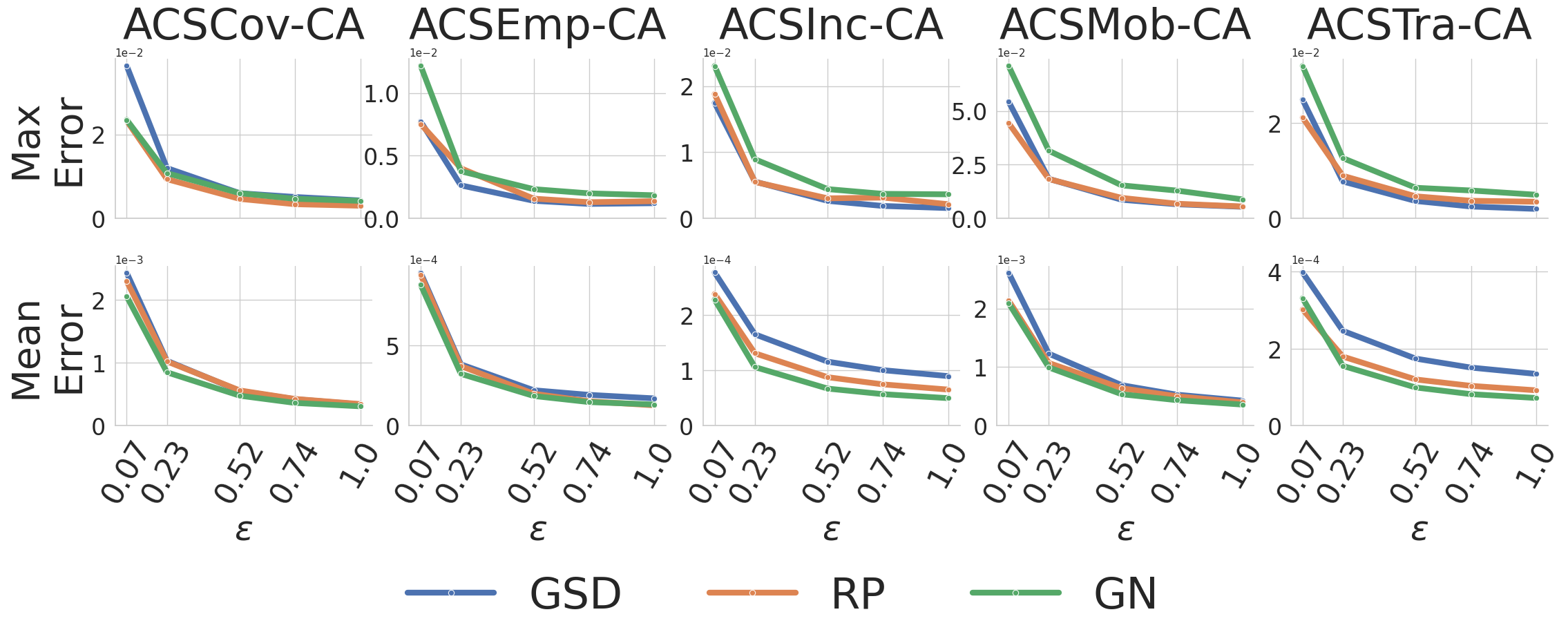}
    \else
    \includegraphics[width=1\linewidth]{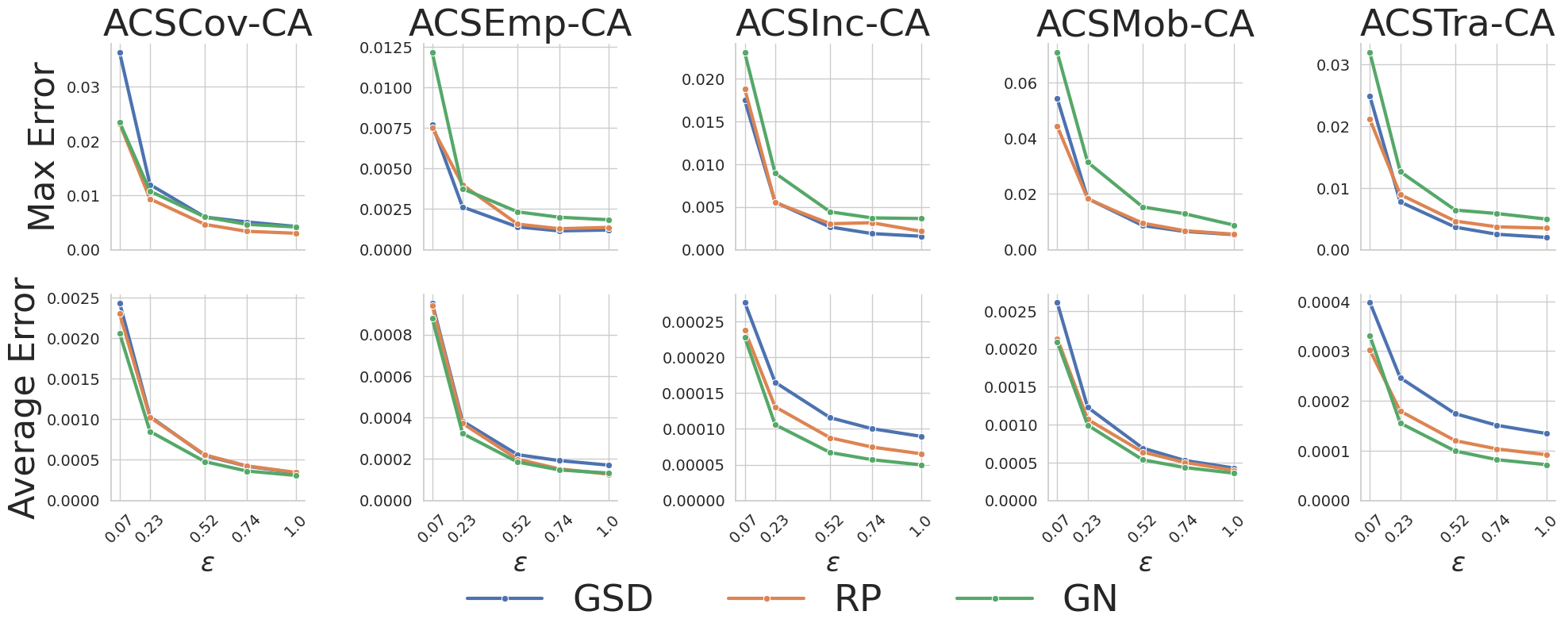}
    \fi
    \caption{
    \textbf{2-way Binary Tree Marginals:}
    For this experiment, we ran the one-shot version of \privGA{} and \rap{}. The plot shows  the max and average error evaluated on all ACS tasks with 2-way \emph{range} marginal queries.
    }
    \label{fig:oneshot_2way_bt}
\end{figure}

\begin{figure}[!htb]
    \centering
    \ificml 
    \includegraphics[width=1\linewidth]{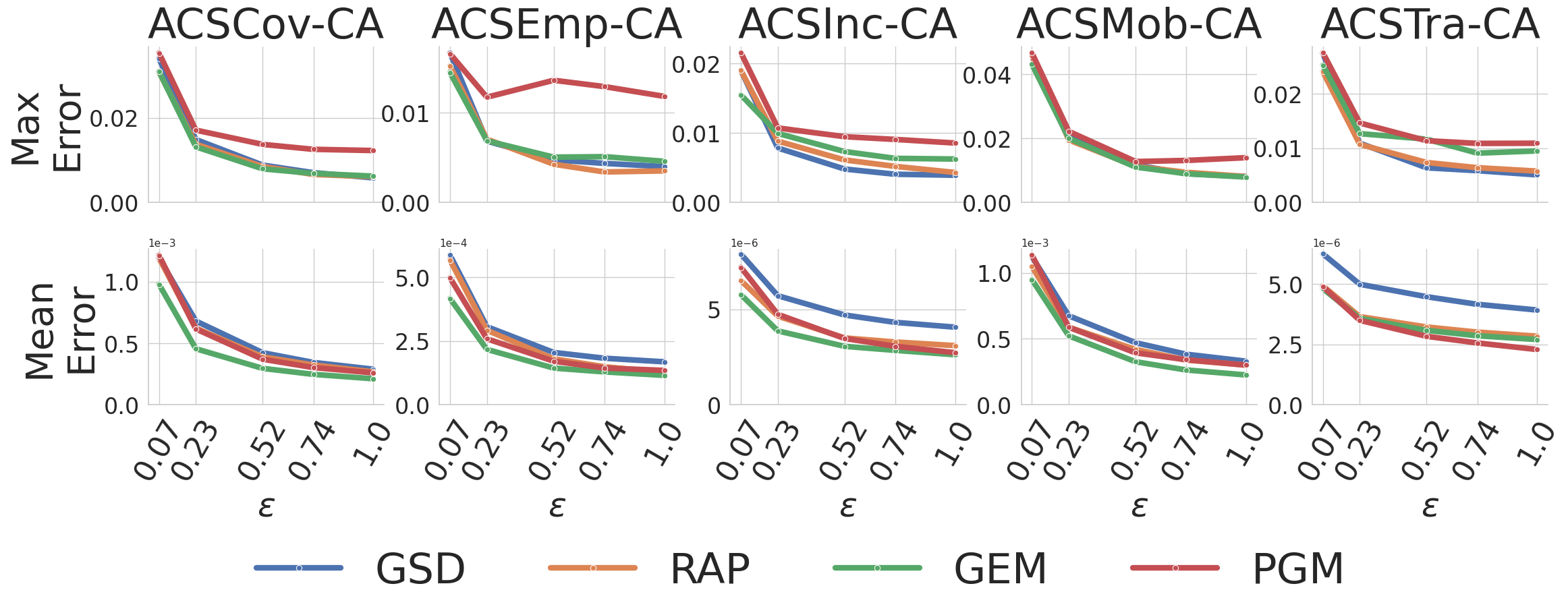}
    \else  
    \includegraphics[width=1\linewidth]{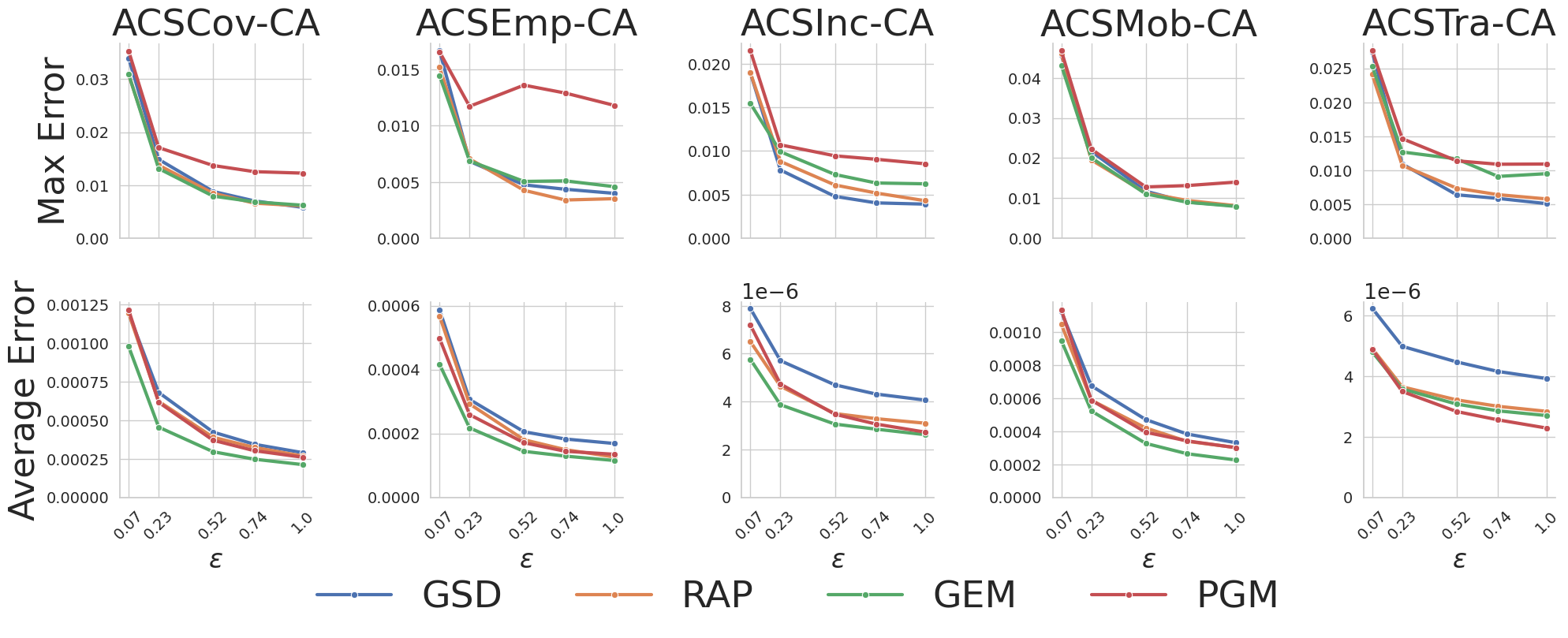}
    \fi
    \caption{
    \textbf{Adaptive $3$-way Categorical Marginals:}
    For this experiment, we ran the \emph{adaptive} version of \privGA{}, \rap, \gem{} and \pgm. The plot shows the max error evaluated on all ACS tasks with 3-way \emph{categorical} marginal queries. 
    }
    \label{fig:ada_3way_cat}
\end{figure}

\subsection{Early Stop} 

In all experiments, \GSD{} operates over a maximum of $G$ generations. However, we employ an early-stop rule if the algorithm has reached a satisfactory level of convergence in optimizing its objective. This not only aids in achieving the desired result more swiftly but also conserves computational resources.

To determine the convergence of the algorithm, we utilize a time-window approach. We define a window size, $w$ (where $w < G$), for assessing the early-stop condition. The algorithm is halted if the percent change in loss between the current generation, $g$, and the generation $g-w$ is less than a predetermined threshold. In all cases in this paper, the threshold is set to $0.0001$. If the change in loss between the two generations within the window falls below this threshold, the algorithm is halted prematurely. This allows us to conclude that the algorithm has sufficiently converged and further iterations would not significantly contribute to the optimization of the objective. Additionally, the size of the time-window, $w$, is set to match the size of the synthetic dataset, $N'$. The proportionality of the window size to the dataset size ensures that our early-stop condition is adapted to the scale of the problem at hand.

\subsection{Results}

We now present empirical results comparing \privGA{} to various baseline methods. For our experiments, we fix $\delta=\frac{1}{N^2}$ and $\varepsilon\in\{0.07, 0.23, 0.52, 0.74, 1.0\}$ to examine the trade-off between privacy and accuracy. For adaptive algorithms, we run using adaptive epochs $T \in \{25, 50, 75, 100\}$ and report the best error (averaged over 3 runs) for each $\varepsilon$ value.

First, we provide empirical evidence that \privGA{} is a better algorithm for generating real-valued synthetic data (i.e., when the statistical queries are non-differentiable). As originally shown in \citet{vietri2022private}, Figures \ref{fig:errors_prefix} and \ref{fig:errors_halfspace} demonstrate that \rappp{}, the previous state-of-the-art method, can optimize over prefix and halfspace queries using a surrogate differentiable approximation in combination with temperature annealing. However, by directly optimizing over such objectives using zeroth order optimization, \privGA{} significantly outperforms \rappp{} with respect to max and average error. 

Next, we show that on discretized data (both categorical and mixed-type), \privGA{} mechanism performs equally as well as prior work \rp{}, \gn{} and \pgmem. In Figures \ref{fig:oneshot_2way_cat} and \ref{fig:ada_3way_cat}, we evaluate on categorical marginals, which are one of the primary focuses of \rp{}, \gn{} and \pgm{}.\footnote{We note that \pgm{} cannot be run with the one-shot mechanism since the size of graphical model becomes to large.} In addition, we evaluate on binary-tree queries in Figure \ref{fig:oneshot_2way_bt}. We note that for \rp{} and \gn{}, we discretize the data in such a way that the resulting bins align with the intervals comprising our binary-tree marginal queries, allowing us to also write differentiable forms of such queries for \rp{} and \gn{}.
Nevertheless, we observe that \privGA{} is able to synthesize both categorical-only (for categorical marginals) and mixed-type (for binary-tree marginals) data that matches the performance of the three baseline algorithms. Moreover, unlike the baseline methods, \privGA{} does not require that the numerical features be discretized first when optimizing over binary-tree marginal queries.

\subsection{ML Evaluation}
This section critically assesses the utility of synthetic data in training machine learning models. We explore two main parameters in our investigation: the choice of mechanisms, such as \GSD{} or \rappp, and the selection of statistics to match, including categorical marginals, Binary Tree, or Halfspace queries.

We direct our attention toward datasets that encapsulate multiple classification tasks concurrently. To this end, we construct a multitask dataset that amalgamates all five predictive tasks outlined in \cref{fig:MLf1} by combining all feature and target columns. The learning challenge presented by a multitask dataset involves developing separate models, each predicting one of the target columns based on the feature columns. We exclude the employment task because its target is highly correlated with feature from another tasks, making the prediction problem trivial. Thus, the multitask dataset incorporates 25 categorical features, nine numerical features, and four binary target labels.

We initiate the experiment by dividing each dataset into a training and test set, using an 80/20 partition. The training set is subsequently processed by a private mechanism to produce a synthetic dataset that we use to train a logistic regression model. Lastly, the performance of the model is evaluated on the test data using the F1 metric. The F1 metric considers both precision and recall, providing a balanced assessment of model performance, particularly when dealing with imbalanced datasets. It is calculated as the harmonic mean of precision and recall, with a potential range from $0$ to $1$, wherein a higher score denotes superior model performance.

Our results are tabulated in Table ML and showcase the F1 score for predicting four target labels using logistic regression trained on various synthetic data methodologies alongside the F1 score of a logistic regression model trained on the original training data. The findings suggest that the \GSD{} mechanism coupled with Binary-Tree queries is a favorable choice among the other synthetic data baselines examined in this experiment. However, we note that the F1 scores of all private synthetic data methodologies are inferior to the model trained on the original data, even in a low privacy regime (i.e., $\epsilon=1$).

The question of why synthetic data does not get better performance in training ML models remains an open area of exploration. One conjecture is that we have yet to identify the right set of data properties (or statistical queries) that can be used to generate high-quality synthetic data for machine learning applications. However, we posit that the \GSD{} mechanism is well-equipped to tackle this challenge due to its inherent ability and flexibility in managing diverse queries.

\begin{figure}[!htb]
    \centering
    \includegraphics[width=0.7\linewidth]{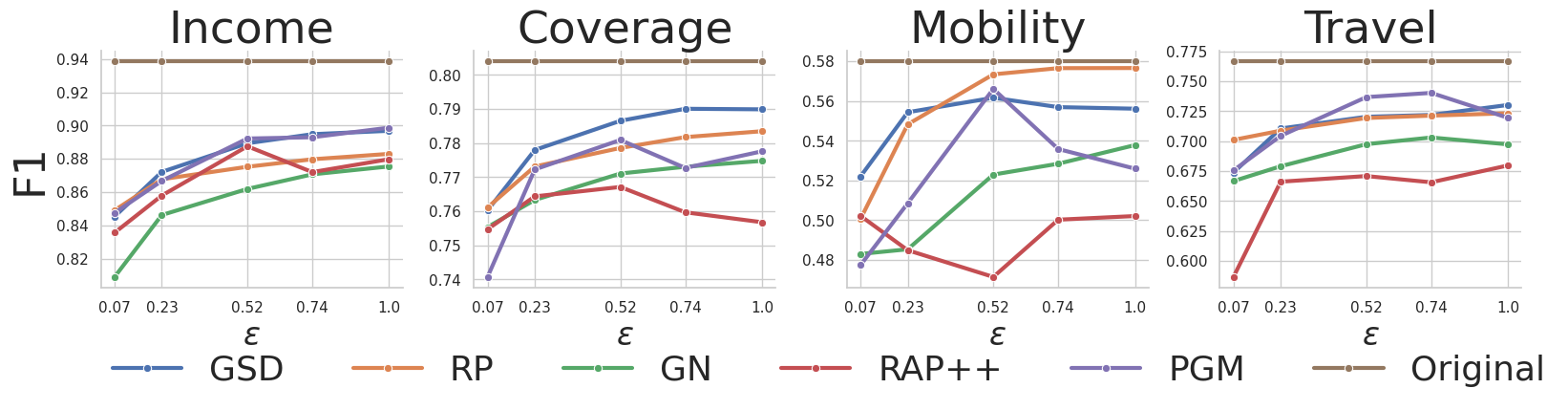}
    \caption{
    \textbf{Machine Learning Evaluation: } F1 test score of logistic regression trained on synthetic data. Each ACS dataset is partitioned into training and testing set, using an 80/20 split. The train set is used to train the synthetic data algorithm, and the F1 score is computed using the test set.
    Each algorithm is trained either on Binary Tree (BT) marginal queries or Halfspace (HS) queries.   We fix $\delta={1/N^2}$ and vary $\varepsilon\in\{0.07, 0.23, 0.52, 0.74, 1.0\}$. Results are averaged over 3 runs. }
    \label{fig:MLf1}
\end{figure}

\subsection{Ablation study on \GSD{} parameters}\label{sec:ablationstudy}
This sections analyzes different configurations and parameters of \GSD{} and their effect on the final approximation error and runtime. As described in \cref{alg:GSD}, the relevant parameters are the number of generations $G$, the number of mutations $\Pmut$, the number of crossovers $\Pcross$ and the elite size $E$. 

\paragraph{Population size} In this section, we delve into the investigation of parameters $\Pmut$ and $\Pcross$, which guide the generation of candidate synthetic datasets in each iteration. $\Pmut$ and $\Pcross$ represent the count of candidates created via the mutation and crossover strategies respectively. Consequently, these population size parameters correspond to the exploration rate of the \GSD{} method; larger population sizes correspond to a broader array of potential solution candidates at each generation.

Nonetheless, a trade-off exists: larger population sizes incur higher computational costs since \GSD{} to computes the objective function for synthetic dataset candidate. If we denote the quantity of queries as $m$ and the population size as $P$, the computational cost of \GSD{} for each iteration equals $O(P\cdot m)$. Conversely, a reduced population size $P=\Pmut+\Pcross$ may result in slower convergence.

To identify the optimal population size, we conduct an experiment where we adjust the population size and observed its impact on the final error and computation time. We run \GSD{} that outputs a synthetic dataset matching all $2$-way marginal queries on five separate ACS datasets. For each dataset, we initiate \GSD{} with different $\Pmut$ and $\Pcross$ values, maintaining $\Pmut \leq \Pcross$ and keeping other parameters constant. The privacy parameter $\epsilon$ is set to $1$, synthetic data rows to $N'=1000$, and the number of iterations to $G=300,000$ with early stopping.

We present our experimental findings in figures \cref{tab:ablationsummary} and \cref{fig:ablation_runtime}. \cref{tab:ablationsummary} offers insights into the impact of the population size parameters on the final error and runtime. Notably, while the population size parameters only moderately affect the final error, they substantially influence the computational time. Given the stability of the \GSD{} error across different parameters, we next visualized the effect of these parameters on computational time in \cref{fig:ablation_runtime}.

The heatmap in \cref{fig:ablation_runtime} presents total computational time for each parameter combination, where darker hues indicate longer computational times. From this visualization, it is evident that the optimal population size is contingent on the dataset in question. For instance, while the \acscovdata{CA}, \acsempdata{CA}, and \acsmobdata{CA} datasets benefited from larger population sizes, the \acsincdata{CA} and \acstradata{CA} datasets performed better with smaller ones. However, we find that a robust parameter configuration that yields consistently good performance across all tasks was found to be $\Pmut=100$ and $\Pcross=100$.

\begin{table}[ht]
    \centering
    \begin{tabular}{c | c | c c c c }
    \toprule
Data & Queries & Min Average Error & Max Average Error & Min Run Time(s) & Max Run Time(s)\\
\midrule
  \acscovdata{CA}&15894&0.0005921&0.0006259&50&247\\
  \acsempdata{CA}&10570&0.0004293&0.0004939&41&206\\
  \acsmobdata{CA}&38077&0.0005560&0.0005838&73&335\\
  \acsincdata{CA}&274641&0.0000761&0.0000849&165&510\\
  \acstradata{CA}&438594&0.0000960&0.0001012&234&598\\
    \bottomrule
    \end{tabular}
    \caption{ \textbf{Ablation Study Summary:}  A summary of the \GSD{} ablation study outlined in \cref{sec:ablationstudy}. The study entails running \GSD{} to optimize over all $2$-way binary-tree marginals in a oneshot setting, testing various $\Pmut$ and $\Pcross$ parameter combinations. The `Queries' column shows the total size of the query set designated for each dataset task. The remaining columns indicate best and worst case error/runtime for each ($\Pmut$, $\Pcross$) pairing per dataset. This summary suggests that \GSD{} error is relatively insensitive to $\Pmut$ and $\Pcross$ parameters, but the choice of these parameters significantly affects runtime.
   }
    \label{tab:ablationsummary}.
\end{table}

\begin{figure}
    \centering
    \includegraphics[width=\textwidth]{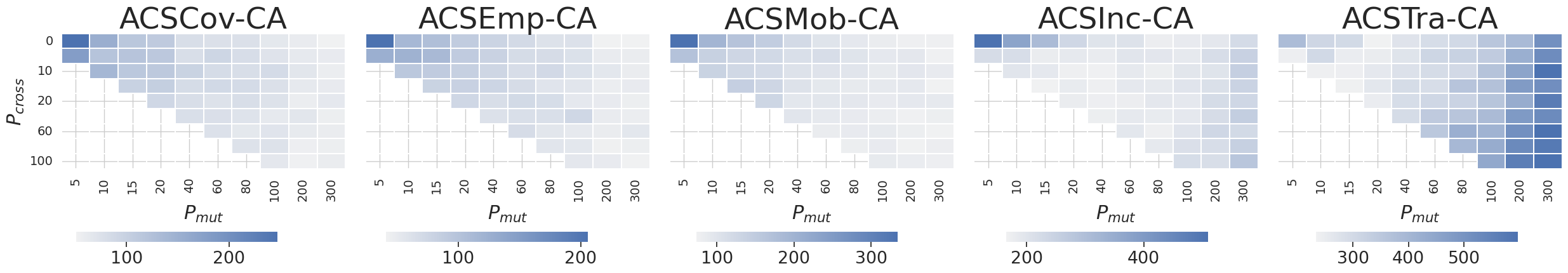}
    \caption{\textbf{ Ablation Study Runtimes:} 
    Heat maps displaying the total runtime of \GSD{} optimizing over all $2$-way binary-tree marginals in the oneshot setting. This is carried out for varying $\Pmut$ and $\Pcross$ parameters across five datasets. The constant parameters are: $G=300000$, $N'=1000$, $\epsilon=1$, and $\delta=1/N^2$. The \GSD{} algorithm incorporates the early stopping rule described in \cref{sec:GSDdesc}. Darker colors correspond to longer runtimes for each $\Pmut$/ $\Pcross$ pairing. A color-runtime legend is provided under each heat map. For a summary of error, runtime, and query set size in this ablation study, refer to \cref{tab:ablationsummary}.
    }

    \label{fig:ablation_runtime}
\end{figure}

\paragraph{Genetic Operators Rate}
In this section, we investigate an alternate approach to the \GSD{} algorithm that formulates new candidates by modifying multiple rows of synthetic data via either mutation or crossover operations. Typically, as detailed in \cref{sec:GSDdesc}, the \GSD{} method generates a new candidate synthetic data by adjusting a single entry of the optimal synthetic dataset, $\Dhat_g^\star$. However, by applying genetic operators to multiple rows of $\Dhat_g^\star$, we can potentially accelerate the convergence rate. We hence introduce `mutation rate' and `crossover rate' to denote the number of rows altered in each operation. For example, if the mutation rate is $k$, each candidate solution generated using the mutation strategy will differ from $\Dhat_g^\star$ at $k$ locations.

We conduct an experiment to understand the influence of these rates, in which we vary both the mutation rate and the crossover rate. We run \GSD{} on \acsmobdata{CA} and with all 2-way marginal statistics. The privacy parameter is set to $\epsilon=1$, the maximum number of generations capped at $100,000$, the population size constrained to $100$, and the size of the synthetic data set at $1,000$.

\cref{tab:parameters} depicts the final average error and the corresponding runtime in seconds for each combination of mutation rate and crossover rate. Our findings indicate that both the mutation rate and the crossover rate have a significant impact on error and run time. Interestingly, a mutation rate and crossover rate of $1$ yielded the best average error and run time across all combinations, establishing it as a consistently dependable choice in our experiment. This result emphasizes the importance of selecting these rates carefully for optimal algorithm performance. It further provides evidence that amplifying the level of perturbation in the candidate generation step can destabilize \GSD{} and lead to suboptimal performance.

\begin{table}[h]
    \centering
    \begin{tabular}{c c| c c c c c}
    \toprule
    & & \multicolumn{5}{c}{Mutation Rate} \\ 
    &     & 1 & 2 & 5 & 10 & 25 \\
         \midrule  
  \multirow{5}{*}{ \rotatebox{90}{Crossover Rate}} &       1 & \textbf{0.0476 / (83)} & 0.0474 / (102) &0.0495 / (129) & 0.0553 / (162) &0.0826 / (266) \\ 
& 2 &
0.0480 / (117) &
0.0487 / (116)&
 0.0507 / (135)&
0.0590 / (168)&
0.0860 / (272) \\
& 5 &
0.0542 / (127) &
0.0562 / (135) &
0.0617 / (154) &
0.0728 / (189) &
0.0995 / (293) \\
& 10 & 
0.0825 / (161) &
0.0839 / (168) &
0.0908 / (189) &
 0.0987 / (223) &
0.1216 / (327) \\
& 25 & 
0.1922 / (264) &
0.1942 / (272) &
0.1918 / (291) &
0.1905 / (326) &
0.1945 / (435)
    \end{tabular}
    \caption{
    Effect of `mutation rate' and `crossover rate'. The numbers represent the average error for answering 2-way range queries and, in parenthesis, it is the total runtime in seconds.
    Results are averaged over 3 runs. For all experiments we fixed the following parameters: Input data is \acsmobdata{CA}, the max number of generations was $100,000$, the synthetic data size was $1,000$, $\Pmut=50$ and $\Pcross=50$, and the privacy parameter $\epsilon=1$. }
    \label{tab:parameters}
\end{table}

\section{Conclusion}

We present \privGA{}, a versatile genetic algorithm that can generate synthetic data to approximate a wide range of statistical queries, regardless of their differentiability. \privGA{} uses novel crossover and mutation routines that are crucial to its success. Our empirical evaluation of the \privGA{} mechanism on mixed-type data demonstrates that \privGA{} outperforms the state-of-the-art method, \rappp{}, on non-differentiable queries and matches first-order optimization methods for differentiable ones. Overall, genetic algorithms present a promising solution for generating mixed-type synthetic data, and we hope our work will inspire further exploration of such approaches in privacy research. The \GSD{} source code is publicly available at \url{https://github.com/giusevtr/private_gsd}.

\section*{Acknowledgements}
We thank Gokul Swamy for his valuable input during our preliminary conversations, particularly his recommendation to explore genetic algorithms, proved instrumental in our research.  Additionally, we thank Shuai Tang, whose assistance  facilitated the successful execution of our experiments. TL and ZSW were supported in part by the NSF Award \#2120667 and a Cisco Research Grant.

\bibliographystyle{plainnat}
\bibliography{main}

\newpage
\appendix
\onecolumn
\section{Accuracy Analysis of \privGA{}}

Our mechanism inherits similar oracle accuracy guarantees of the (non-adaptive) \rap{} mechanism, which were proven in \citet{aydore2021differentially}. 

\begin{theorem}
\label{thm:accurayAppendix}
For a discrete data domain $\cX$ and any dataset $D\in\cX^N$ with $N$ rows.
Fix privacy parameters  $\epsilon, \delta>0$, the synthetic dataset size $N'$, and any set of $m$ queries $Q:\cX^*\rightarrow[0,1]^m$. If the one-shot \privGA{} mechanism solves the minimization in the projection step exactly, then 
the one-shot \privGA{} mechanism outputs a synthetic data $\Dhat$ with
 average error bounded as
\begin{align*}
  \sqrt{\tfrac{1}{m} \| Q(D) - Q(\Dhat) \|_2^2} \leq 
   O\pp{
   \frac{
    \pp{(\log(|\cX|/\beta)\ln(1/\delta)}^{1/4}
   }{\sqrt{\epsilon\cdot N }}
   + \frac{\sqrt{\log(m)}}{\sqrt{N'}}
   }
\end{align*}
\end{theorem}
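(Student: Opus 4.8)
The plan is to decompose the error $\sqrt{\tfrac{1}{m}\|Q(D) - Q(\Dhat)\|_2^2}$ into two sources: (i) the noise injected by the Gaussian mechanism when forming the private estimate $\hat a$, and (ii) the ``representation error'' incurred by forcing the synthetic dataset $\Dhat$ to have only $N'$ rows. First I would invoke the exact-optimization assumption: since $\Dhat$ minimizes $\|\hat a - Q(\cdot)\|_2$ over $\cX^{N'}$, it does at least as well as the best $N'$-row dataset $D^\star$ that approximates $D$ itself. By the triangle inequality, $\|Q(D) - Q(\Dhat)\|_2 \le \|Q(D) - \hat a\|_2 + \|\hat a - Q(\Dhat)\|_2 \le \|Q(D) - \hat a\|_2 + \|\hat a - Q(D^\star)\|_2 \le 2\|Q(D) - \hat a\|_2 + \|Q(D) - Q(D^\star)\|_2$. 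So it suffices to bound the Gaussian-noise term $\|Q(D)-\hat a\|_2$ and the sampling term $\|Q(D) - Q(D^\star)\|_2$ separately.

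For the Gaussian-noise term, I would use that $\hat a - Q(D)$ is a spherical Gaussian in $\mathbb{R}^m$ with per-coordinate variance $\Delta(Q)^2/(2\rho)$, so $\mathbb{E}\|\hat a - Q(D)\|_2^2 = m\,\Delta(Q)^2/(2\rho)$, and a standard concentration bound (e.g. on the norm of a Gaussian vector, or simply Markov plus a constant-factor slack) gives $\|\hat a - Q(D)\|_2 = O(\sqrt{m}\,\Delta(Q)/\sqrt{\rho})$ with probability $1-\beta$. Converting $\rho$-zCDP to $(\epsilon,\delta)$-DP via the stated relation $\rho \approx \epsilon^2/\log(1/\delta)$ (up to constants, solving $\epsilon = \rho + 2\sqrt{\rho\log(1/\delta)}$), and using $\Delta(Q) = O(1/N)$ for statistical queries (sensitivity $1/N$ per coordinate, but the relevant scaling after normalization), the per-query noise contributes $\sqrt{\tfrac{1}{m}\|\hat a - Q(D)\|_2^2} = O\!\left(\tfrac{\Delta(Q)}{\sqrt{\rho}}\right) = O\!\left(\tfrac{(\log(1/\delta))^{1/4}}{\sqrt{\epsilon}\,N}\cdot\sqrt{\text{stuff}}\right)$; I would track the exponents carefully so the $(\cdot)^{1/4}$ in the theorem comes out right.

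The representation term is where the real content is, and I expect it to be the main obstacle. Here I would appeal to a sampling argument: take $D^\star$ to be $N'$ i.i.d.\ draws from the empirical distribution of $D$ (so $\mathbb{E}\,Q(D^\star) = Q(D)$ coordinatewise), and bound $\max_i |q_i(D^\star) - q_i(D)|$ uniformly over the $m$ queries. Since each $q_i(D^\star)$ is an average of $N'$ bounded i.i.d.\ terms, a Hoeffding bound gives deviation $O(\sqrt{\log(m/\beta)/N'})$ for all $m$ queries simultaneously with probability $1-\beta$; this is exactly the $\sqrt{\log m}/\sqrt{N'}$ term. One subtlety is that the discreteness of $\cX$ enters through the union bound over the effective number of distinct query behaviors, which is where $\log|\cX|$ appears — I would argue that the relevant complexity is controlled by $\log|\cX|$ rather than $\log m$ in the first term, reflecting that the synthetic dataset lives in a domain of size $|\cX|$. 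Finally I would combine the two high-probability bounds by a union bound (rescaling $\beta$ by a constant), divide by $\sqrt m$, and collect constants to match the stated $O(\cdot)$ expression. The only genuinely delicate points are getting the $1/4$-power dependence on $\log(|\cX|/\beta)\ln(1/\delta)$ to line up (this comes from combining the square-root from the Gaussian norm with the square-root from the zCDP-to-DP conversion) and making sure the exact-optimization oracle is used correctly to pass from $\Dhat$ to the sampled witness $D^\star$.
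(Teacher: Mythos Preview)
Your decomposition has a genuine gap. By routing through $\|Q(D)-\hat a\|_2$ directly, you throw away the entire benefit of projection: $\hat a - Q(D)$ is an $m$-dimensional Gaussian with per-coordinate standard deviation $\sigma = \Delta(Q)/\sqrt{2\rho}$, so $\sqrt{\tfrac{1}{m}}\|\hat a - Q(D)\|_2 \approx \sigma$. For $m$ statistical queries the $\ell_2$-sensitivity is $\Delta(Q)=\sqrt{m}/N$, giving a first term of order $\sqrt{m}/(N\sqrt{\rho})$ --- a polynomial dependence on $m$, with no $\log|\cX|$ anywhere and no $1/4$-power. Your chain $\|Q(D)-Q(\Dhat)\|_2 \le 2\|Q(D)-\hat a\|_2 + \|Q(D)-Q(D^\star)\|_2$ is valid, but the first term is simply too large to yield the stated bound. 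Your intuition that ``$\log|\cX|$ enters through a union bound in the sampling step'' is also off: the sampling/Hoeffding step produces the $\sqrt{\log m}/\sqrt{N'}$ term only.

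The paper does not attempt to bound the raw noise $\|\hat a - Q(D)\|_2$. Instead it invokes the projection-mechanism guarantee of \citet{NKL} as a black box: if $\Dhat^*$ is the minimizer of $\|\hat a - Q(\cdot)\|_2$ over \emph{arbitrary-size} datasets in $\cX^*$, then
\[
\sqrt{\tfrac{1}{m}\|Q(D)-Q(\Dhat^*)\|_2^2} \;\le\; O\!\left(\frac{(\ln(|\cX|/\beta)\ln(1/\delta))^{1/4}}{\sqrt{\epsilon N}}\right).
\]
This is where both $\log|\cX|$ and the $1/4$-power originate: the NKL analysis exploits that the set of achievable answer vectors is the convex hull of only $|\cX|$ points, so projection onto it cancels most of the Gaussian noise. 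The paper then passes from $\Dhat^*$ (unrestricted size) to the size-$N'$ output $\Dhat$ via the BLR sampling lemma (Lemma~3.7 of \citet{BLR08}) and a triangle inequality through $Q(\Dhat^*)$ --- not through $\hat a$. To fix your argument, you would either need to cite the NKL result directly, or reproduce its Gaussian-width / hereditary-discrepancy analysis; the elementary noise bound you sketch is not enough.
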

\begin{proof}

We begin with the projection mechanism, which has the following guarantee proven by \citet{NKL}: Given the discrete data domain $\cX$, let $\cX^*$ be the set of datasets of arbitrary size with rows from the domain $\cX$.

The projection mechanism by \citet{NKL}, perturbs the answers of $Q$ on dataset $D$ using the Gaussian mechanism to obtain a vector $\hat{a}$, and then finds a synthetic dataset $\Dhat^*\in \cX^*$ by solving the following objective
\begin{align*}
    \Dhat^* \leftarrow \argmin_{\Dhat \in \cX^*} \|\hat{a} - Q(\Dhat)\|
\end{align*}
with the following accuracy guarantee:
\begin{align*}
    \sqrt{ \tfrac{1}{m}\| Q(D) - Q(\Dhat^*) \|_2^2} \leq \alpha  = O\pp{\frac{\pp{\ln(|\cX| /\beta)\ln(1/\delta)}^{1/4}}{\sqrt{\epsilon\cdot N}}} 
\end{align*}

Next, let $m$ be the number of queries in $Q$. By a simple sampling argument, proven in Lemma 3.7 of \citet{BLR08}, there exists a dataset $\Dhat$ of size $N'$ that satisfies the following:
\begin{align*}
     \|Q(\Dhat) - Q(\Dhat^*)\|_{\infty} \leq \sqrt{\frac{\log(m)}{N'}}
\end{align*}

Finally, suppose that the output of \privGA{} is $\Dhat\in\cX^{N'}$, which is generated by solving an optimization problem exactly as follows:
\begin{align*}
    \Dhat \leftarrow \argmin_{\Dhat \in \cX^{N'}} \|\hat{a} - Q(\Dhat)\|
\end{align*}

Then, by triangle inequality:
\begin{align*}
    \sqrt{ \tfrac{1}{m}\| Q(D) - Q(\Dhat) \|_2^2}
    &= \sqrt{ \tfrac{1}{m}\| Q(D) - Q(\Dhat^*) + Q(\Dhat^*) - Q(\Dhat) \|_2^2} \\
    &\leq  \sqrt{ \tfrac{1}{m}\| Q(D) - Q(\Dhat^*) \|_2^2} + \sqrt{ \tfrac{1}{m}\|  Q(\Dhat^*) - Q(\Dhat) \|_2^2} && (\text{Triangle inequality})\\
    &\leq \sqrt{ \tfrac{1}{m}\| Q(D) - Q(\Dhat^*) \|_2^2} + \|Q(\Dhat) - Q(\Dhat^*)\|_{\infty} \\
    &\leq O\pp{\frac{\pp{\ln(|\cX| /\beta)\ln(1/\delta)}^{1/4}}{\sqrt{\epsilon\cdot N}}}  + \sqrt{\frac{\log(m)}{N'}} \\
\end{align*}
\end{proof}

Despite the bounds in \cref{thm:accurayAppendix} matching from the bounds in \citet{aydore2021differentially}, there is no guarantee about the performance \privGA{}'s optimization. Therefore, we empirically show that \privGA{} has comparable performance to \rap{} on matching marginal statistics over many different datasets.
In \cref{fig:oneshot_2way_cat}, we provide results on categorical marginals and with different ACS tasks as defined in Table \ref{tab:acs_data_attributes}. This brief experiment also shows that, like in the adaptive setting, \privGA{} achieves similar performance to \rap{} on differentiable queries in the one-shot setting across many datasets. We use the same hyperparameters used in our adaptive experiments (Table \ref{tab:hyperparameters}), excluding our choices for $T$, which are not applicable in the one-shot setting.

\section{Additional algorithm details}

\subsection{Adaptive Selection}\label{appx:adaptive}

We present the Adaptive Selection framework in Algorithm \ref{alg:adaptive}, and also provide its privacy proof below.

\begin{algorithm}[!hbt]
\caption{Adaptive Framework for Synthetic Data  }
 \label{alg:adaptive}
\begin{algorithmic}[1]
\STATE\textbf{Input:} A private dataset $D\in\cX^N$ with $N$ rows and $d$ columns, A set of queries $Q$, number of adaptive epochs $T$, number of samples per epoch $S$, privacy parameter $\rho$.

\STATE Split privacy budget $\rho' = \rho / (2 \cdot    T \cdot S)$.

\STATE Randomly initialize a synthetic data $\Dhat_{1}$.
\FOR{$t\in [T]$}{
    \FOR{$i \in [S]$}
        \STATE \textbf{Select} a query $q_{t,i}$ from $Q$ using a $\rho'$-zCDP select mechanism and score function 
        $$\text{score}(q) = \| q(D) - q(\Dhat_t) \|_{\infty} $$
        \STATE \textbf{Measure} statistic with Gaussian mechanism: 
$$\hat{a}_{t,i} \leftarrow Q(D) + \cN\pp{0, \tfrac{\Delta(q)}{2 \cdot \rho'}}$$
    \ENDFOR
    \STATE Let $\hat{a}_t = \left( a_{1,1}, \ldots, a_{t, S} \right)$  and $Q_t(\Dhat) = (q_{1, 1}(\Dhat), \ldots, q_{t, S}(\Dhat ) ) $.
    \STATE \textbf{Project} $\widehat{D}_{t+1} \leftarrow \argmin_{\widehat{D}} \|\hat{a}_t - Q_t({\Dhat})\|_2$ 
}
\ENDFOR

\STATE \textbf{return} $\Dhat_{T+1}$
\end{algorithmic}
\end{algorithm}


\subsection{\rappp{} and its limitations}\label{appx:rappp}
\newcommand{\qtil}{\tilde{q}}
\begin{algorithm}[t]
\small
\begin{algorithmic}[1]
\caption{Relaxed Projection with Sigmoid Temperature Annealing}\label{alg:rappp}
\STATE \textbf{Input: } A set of $m$ sigmoid differentiable queries $\{\tilde{q}^{[\cdot]}_i\}_{i\in [m]}$, a set of $m$ target answers $\hat{a}=\{\hat{a}_i \}_{i\in [m]}$, initial inverse temperature $\sigma_1\in\mathbb{R}^+$, stopping condition $\gamma>0$, and initial dataset $\widehat{D}_1$.



\FOR{$j=1$ \textbf{ to } $J$}
\STATE Set inverse temperature $\sigma_j = \sigma_1 \cdot 2^{j-1}$.
\STATE Define the sigmoid differentiable loss function: 
$    \Loss_j(\widehat{D}) = \sum_{i \in [m]} \left( \qtil^{[\sigma_j]}_i(\widehat{D}) - \hat{a}_i \right)^2$
\STATE  Starting with $\widehat{D}\leftarrow \widehat{D}_j$.
Run  gradient descent on $L_j({\widehat{D}})$ until    $\|\nabla \Loss_j(\widehat{D})\| \leq \gamma$.
Set $\widehat{D}_{j+1} \leftarrow \widehat{D}$.
\ENDFOR
\STATE \textbf{Output} $\widehat{D}_{J+1}$
\end{algorithmic}
\end{algorithm}
We present the details of \rappp{ }'s first-order optimization routine in  \cref{alg:rappp}. As mentioned before, this technique relaxes the objective to gain differentiability. However, some issues arise.
In order to showcase the issue with the relaxation step, we construct an example, where \rappp{} fails to optimize even a single prefix query on a dataset with one real-valued feature. Let $f_{0.5}(x) = \mathbb{I}\{x \leq 0.5\}$ be a prefix function with threshold $0.5$,  then for a one-dimensional dataset $D\in [0,1]^N$ with $N$ rows the corresponding statistical query is $q_{0.5}(D) = \frac{1}{N}\sum_{x\in D} f_{0.5}(x)$. 
And, let $\hat{a} = q_{0.5}(D) $ be the true answer of the prefix query on dataset $D$.
Then,  \rappp{} wants to find a synthetic data $\Dhat$ that minimizes the objective $\|\hat{a} - q_{0.5}(\Dhat)\|_2$, which is non-differentiable due to the threshold query. Therefore, \rappp{} optimizes a differentiable relaxation of the objective defined by $q_{0.5}$. The relaxation stop replaces the prefix function $f_{0.5}$, with a \emph{Sigmoid} function, which has well-behaved gradients with adjustable magnitudes via the inverse temperature parameter.
 A { Sigmoid} function   with threshold $0.5$ and {\em inverse temperature parameter} $\invtemp$ is given by
$    f_{0.5}^{[\invtemp]}(x)=\tfrac{1}{\pp{{1+\exp(-\invtemp\cdot (x-0.5))}} }$

The corresponding Sigmoid approximation to the prefix query is given by $\tilde{q}_{0.5}^\invtemp(D) = \frac{1}{N}\sum_{x\in D} f_{0.5}^{[\invtemp]}(x)$.
The relaxed objective then becomes $\|\hat{a} - \tilde{q}_{0.5}^\invtemp(\Dhat)\|_2$.

Now, using a simple example, we show that \rappp{} can get stuck in a bad local minimum of the differentiable loss function. Suppose that the input data $D$ consist of $N/2$ entries with value equal to $0$ and $N/2$ entries with value equal to $1$, that is $D=\{0, \ldots, 0, 1, \ldots, 1\}\in [0,1]^N$. Then the input dataset $D$, evaluates to $q_{0.5}(D) = 0.5$ on the prefix query (half the data points are bellow the $0.5$ threshold).
Next, consider the synthetic data point $\Dhat=\{0.5, \ldots, 0.5\}$, which evaluates to $\tilde{q}_{0.5}(\Dhat) = 0.5$  on the Sigmoid prefix function, then,  $\Dhat$ is a local minimum of the objective function ($\|\hat{a} - \tilde{q}_{0.5}^\invtemp(\Dhat)\|_2 = 0$).
%
%
However, when evaluated on the actual prefix query, the synthetic data point $\Dhat$ evaluates to $q_{0.5}(\Dhat) = 1$. Therefore, the error on the original loss function is $0.5$.

This simple example shows how the first-order optimization procedure by \citet{vietri2022private} can get stuck in a bad local minimum, resulting in a poor approximation of the statistical queries. Furthermore, in the experiment section, we will show that the \rappp{} algorithm fails in practice.

\subsection{Modifications to \rp{} / \gn{}}\label{appx:rapgem}

Suppose we have a data domain $\gX = \gX_1 \times \ldots \times \gX_d$ contains $d$ discrete attributes. Then both \rp{} and \gn{} (both the one-shot and adaptive variants) model such features as a mixture of $K$ product distributions over attributes $\gX_i$ (and possible values they can take on). We denote each product distribution as $P_i = \prod_{j=1}^d P_{ij}$, where $P_{ij}$ is a distribution over the discrete values of $\gX_j$. The difference between \rp{} and \gn{} then is how each distribution $P_{ij}$ is parametrized. In \rp, the parameters $\theta$ are exactly the values $P_{ij}$. Meanwhile, \gn{} instead outputs the values $P_{ij}$ using a neural network $F_{\theta}$, which takes in as input Gaussian noise $\mathbf{z} \sim \mathcal{N}\left(0, I_K \right)$.

In either case, query answers for $k$-way categorical queries can be calculated as a product over various columns in $P_i$. For example (relaxing notation), suppose $P_{i, \textrm{COLOR}=\textrm{red}}$ and $P_{i, \textrm{SHAPE}=\textrm{circle}}$ correspond to the probabilities that attributes $\textrm{COLOR}=\textrm{red}$ and $\textrm{SHAPE}=\textrm{circle}$. Then the 2-way marginal query counting the proportion of samples that are red circles can be written down as the product query $\frac{1}{K} \sum_{i=1}^{K} P_{i, \textrm{COLOR}=\textrm{red}} \times P_{i, \textrm{SHAPE}=\textrm{circle}}$.

Now suppose we have some numerical column that has been discretized into bins. Then a $k$-way range marginal query can be simple written down as a sum of product queries. For example, suppose we have an attribute QUANTITY that has been discretized into bins of size 10. Then the $2$-way range query for proportion of samples that are red and have quantity between 1 and 20 can be written down as $\frac{1}{K} \sum_{i=1}^{K} P_{i, \textrm{COLOR}=\textrm{red}} \times \left[ P_{i, \textrm{QTY}=\textrm{1-10}} + P_{i, \textrm{QTY}=\textrm{11-20}} \right]$. Therefore, we are able to still optimize over such queries, since they are simply sums over (differentiable) product queries that the algorithms were originally designed for.

Finally, while \citet{liu2021iterative} and \citet{aydore2021differentially} propose different optimization strategies for their algorithms at each round, we introduce a simpler update procedure that we find performs well across our experiments. We present this procedure in Algorithm \ref{alg:rapgem_update}.

\begin{algorithm}[!tbh]
\caption{\rp{} / \gn{} Update}
\label{alg:rapgem_update}
\begin{algorithmic}
\STATE {\bfseries Input:} $\rp{}/\gn{}_\theta$, queries (sampled via the exponential mechanism) $Q = \anglebrack{q_1, \ldots, q_t}$, noisy answers (measured via the Gaussian mechanism) $A = \anglebrack{a_1, \ldots, a_t}$, max iterations $M$, batch size $B$ \\

\FOR{$m = 1 $ {\bfseries to} $2M$}
    \STATE Let synthetic answers $\hat{A} = \rap{}/\gem{}_\theta(Q)$
    \STATE Let errors $P = |A - \hat{A}|$
    \IF{${m \bmod 2} = 0$}
        \STATE Sample a set of query indices $S$ of size $B$ uniformly
    \ELSE
        \STATE Sample a set of query indices $S$ of size $B$ proportional to errors $P$
    \ENDIF
    \STATE Update $\theta$ via stochastic gradient decent according to $\norm{P_S}_1$
\ENDFOR
\end{algorithmic}
\end{algorithm}
\section{Additional experiment details}

In this section we include additional details about our experiments, including information about datasets, queries, and hyperparameters.

\paragraph{Data.} In Table \ref{tab:acs_data_attributes}, we list all attributes for each ACS task that are used in empirical results presented in either the main paper or this appendix.

\begin{table}[!htb]

\centering

\begin{tabular}{l | l| l}
\toprule
Dataset & Categorical Attributes & Numeric Attributes \\
\midrule
\multirow{2}{*}{\acsmob} & MIL, DREM, CIT, DIS, COW, MAR, SCHL, MIG, & JWMNP, PINCP \\
&  NATIVITY, ANC, DEAR, DEYE, SEX, ESR, GCL & WKHP, AGEP \\
&  RAC1P, RELP & \\
\midrule
\multirow{1}{*}{\acsemp} & COW, OCCP, SCHL, MAR, SEX, PINCP, RAC1P  & AGEP \\
& RELP, POBP & \\
\midrule
\multirow{2}{*}{\acscov} & MIL, FER, DREM, CIT, DIS, MAR, SCHL, PUBCOV & \multirow{2}{*}{PINCP, AGEP} \\
& MIG, NATIVITY, ANC, DEYE, SEX, ESR, RAC1P & \\
&  DEAR, ESP& \\ 
\midrule
\multirow{1}{*}{\acsinc} & COW, OCCP, SCHL, MAR, SEX, PINCP, RAC1P & WKHP, AGEP \\
&RELP, POBP & \\ 
\midrule
\multirow{2}{*}{\acstra} & JWMNP, PUMA, CIT, DIS, OCCP, SCHL, JWTR, MAR & \multirow{2}{*}{AGEP, POVPIP} \\
& MIG, SEX, RAC1P, POWPUMA, RELP, ESP & \\
\midrule
\multirow{4}{*}{\acsmultitask} & OCCP, MIG, CIT, ESP, ANC, PUMA, FER, DEAR, DEYE & INTP, WAGP \\
&  ESR, FOCCP, JWTR, WAOB, JWMNP\_bin, DREM, GCL & SEMP, POVPIP \\ & MIL, SCHL, RELP, MAR, NATIVITY, POWPUMA & WKHP, JWRIP \\ & RAC1P, DIS, COW, PUBCOV, SEX, PINCP & AGEP \\

\bottomrule
\end{tabular}
\caption{We list the ACS data attributes used in our experiments.}
\label{tab:acs_data_attributes}
\end{table}

\begin{table}[!htb]
\centering
\begin{tabular}{l | c c c c c }
    \toprule
    & mobility & coverage & employment & income & travel \\
    \midrule
    $2$-way Cat. Marginals & 136 & 136 & 120 & 36 & 91 \\
    $3$-way Cat. Marginals & 680 & 680 & 560 & 84 & 364 \\
    $2$-way BT Marginals & 68 & 34 & 16 & 18 & 28 \\
    \bottomrule
\end{tabular}
\caption{Number of workloads for categorical and range marginal queries used in our experiments.}
\label{tab:workloads}
\end{table}

\paragraph{Hyperparameters.} In Table \ref{tab:hyperparameters}, we list the hyperparameters used for our algorithms.

\begin{table}[!htb]
\centering
\renewcommand{\arraystretch}{1.3}
\begin{tabular}{l l c c}
    \toprule
    Query Class & Method & Hyperparameter & Values \\
    \midrule
    \multirow{20}{*}{All Experiments} & 
    \multirow{5}{*}{\privGA{}}
    & \multirow{1}{*}{Data Size} & $2000$ \\
    & & \multirow{1}{*}{$\Pmut$} & $500/50$ \\
    & & \multirow{1}{*}{$\Pcross$} & $500/50$ \\
    & & \multirow{1}{*}{Elite Size} & $2$ \\
    & & \multirow{1}{*}{Max Generations} & $200000$ \\
    \cmidrule(lr){2-4}
    & \multirow{5}{*}{\rappp}
    & \multirow{1}{*}{Data Size ($N'$)} & $1000$ \\
    & & \multirow{1}{*}{Queries Sampled ($K$)} & $1000$ \\
    & & \multirow{1}{*}{Learning Rate} & $0.0005$, $0.001$, $0.002$, $0.005$, $0.01$ \\
    & & \multirow{2}{*}{Inverse Temp. ($\invtemp$)}
    & $2$, $4$, $8$, $16$, $32$, $64$, $128$ \\
    & & & $256$, $512$, $1024$, $2048$ \\
    \cmidrule(lr){2-4}
    & \multirow{4}{*}{\rap{} / \gem{}}
    & \multirow{1}{*}{\# Product Mixtures ($K$)} & $1000$ \\
    & & \multirow{1}{*}{Batch Size ($B$)} & $5000$ \\
    & & \multirow{1}{*}{Max Iterations ($M$)} & $1000$ \\
    & & \multirow{1}{*}{\# Samples} & $50000$ \\
    \cmidrule(lr){2-4}
    & \multirow{1}{*}{\rap} & \multirow{1}{*}{LR} & $0.03$ \\
    \cmidrule(lr){2-4}
    & \multirow{3}{*}{\gem{}}
    & \multirow{1}{*}{LR} & $0.0003$ \\
    & & \multirow{1}{*}{Hidden Layers} & $(128, 256)$ \\
    & & \multirow{1}{*}{Noise Dimension} & $16$ \\
    \cmidrule(lr){2-4}
    & \multirow{2}{*}{\pgmem}
    & \multirow{1}{*}{Max Iterations} & $250$ \\
    & & \multirow{1}{*}{\# Samples} & $N$ \\
    \midrule
    \multirow{2}{*}{Halfspace / Prefix} & 
    \multirow{1}{*}{\privGA{}} & \multirow{1}{*}{$T$} & $25$, $50$, $75$, $100$ \\
    \cmidrule(lr){2-4}
    & \multirow{1}{*}{\rappp} & \multirow{1}{*}{$T$} & $3$, $4$, $5$, $6$, $7$, $8$, $9$ \\
    \midrule
    \multirow{1}{*}{3-way Categorical} & 
    \multirow{1}{*}{All algorithms} & \multirow{1}{*}{$T$} & $25$, $50$, $75$, $100$ \\
    \bottomrule
\end{tabular}
\caption{Hyperparameters experiments (with adaptivity).}
\label{tab:hyperparameters}
\end{table}

\paragraph{Additional results.} In addition, we include the accuracy of the logistic regression models trained on synthetic data generated from each method \cref{fig:MLaccuracy}. We observe that these results mirror the macro F1 scores presented in \cref{fig:MLf1}.

\begin{figure}[!htb]
    \centering
    \includegraphics[width=0.7\linewidth]{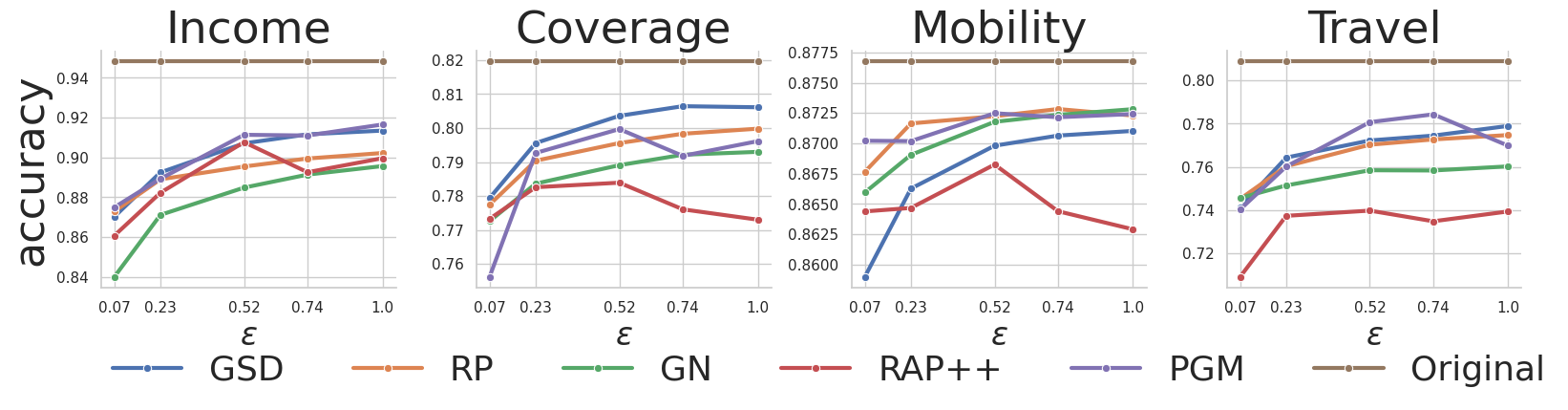}
    \caption{
    \textbf{Machine Learning Evaluation: } Accuracy test score of logistic regression trained on synthetic data. Each ACS dataset is partitioned into training and testing set. The train set is used to train the synthetic data algorithm, and the accuracy score is computed using the test set.
    Each algorithm is trained either on Binary Tree (BT) marginal queries or Halfspace (HS) queries.   We fix $\delta={1/N^2}$ and vary $\varepsilon\in\{0.07, 0.23, 0.52, 0.74, 1.0\}$. Results are averaged over 3 runs. }
    \label{fig:MLaccuracy}
\end{figure}




\end{document}